\documentclass[twoside]{article}

\usepackage{tikz}
\usetikzlibrary{positioning}

\usepackage[linesnumbered,ruled,vlined]{algorithm2e}
\usepackage{bm}
\usepackage{natbib}
\usepackage{graphicx}
\usepackage{comment}
\usepackage{enumerate}   
\usepackage{subcaption}
\usepackage{amssymb}
\usepackage{booktabs}
\usepackage{amsmath}
\newcommand{\ts}{\textstyle}
\newcommand{\bigCI}{\mathrel{\text{\scalebox{1.07}{$\perp\mkern-10mu\perp$}}}}

\DeclareMathOperator*{\argmax}{arg\,max}

\usepackage{latexsym}
\usepackage{amsmath}
\usepackage{appendix}
\newtheorem{theorem}{Theorem}
\newtheorem{corollary}{Corollary}
\newtheorem{assumption}{Assumption}

\newtheorem{example}{Example}[section]
\newtheorem{remark}{Remark}[section]

\newenvironment{proof}[1][Proof]{\textbf{#1.} }{\
\qquad\qquad\rule{0.5em}{0.5em }}
\setlength\textfloatsep{2truemm}

\newcommand{\bx}{\mathbf{x}}

\newcommand{\by}{\mathbf{y}}

\newcommand{\mI}{\mathcal{I}}
\newcommand{\tNA}{\text{NA}}

\newcommand{\mJ}{\mathcal{J}}
\newcommand{\op}{\mathrm{o}_{p}(n^{-1/2})}

\usepackage{color,graphicx,lscape,longtable}

% Set the typeface to Times Roman
\usepackage{times}

%\usepackage{aistats2020}
% If your paper is accepted, change the options for the package
% aistats2020 as follows:
%
\usepackage[accepted]{aistats2020}
%
% This option will print headings for the title of your paper and
% headings for the authors names, plus a copyright note at the end of
% the first column of the first page.

% If you set papersize explicitly, activate the following three lines:
%\special{papersize = 8.5in, 11in}
%\setlength{\pdfpageheight}{11in}
%\setlength{\pdfpagewidth}{8.5in}

% If you use natbib package, activate the following three lines:
%\usepackage[round]{natbib}

% If you use BibTeX in apalike style, activate the following line:
%\bibliographystyle{apalike}

\begin{document}

% If your paper is accepted and the title of your paper is very long,
% the style will print as headings an error message. Use the following
% command to supply a shorter title of your paper so that it can be
% used as headings.
%
%\runningtitle{I use this title instead because the last one was very long}

% If your paper is accepted and the number of authors is large, the
% style will print as headings an error message. Use the following
% command to supply a shorter version of the authors names so that
% they can be used as headings (for example, use only the surnames)
%
%\runningauthor{Surname 1, Surname 2, Surname 3, ...., Surname n}
%\runningtitle{Unified estimation framework for unnormalized models }

\twocolumn[

\aistatstitle{Imputation Estimators for Unnormalized Models with Missing Data}

\aistatsauthor{Masatoshi Uehara \And Takeru Matsuda \And  Jae Kwang Kim}

\aistatsaddress{Harvard University \And The University of Tokyo \\ RIKEN Center for Brain Science \And Iowa State University } ]

\begin{abstract}
Several statistical models are given in the form of unnormalized densities, and calculation of the normalization constant is intractable. We propose estimation methods for such unnormalized models with missing data. 
The key concept is to combine imputation techniques with estimators for unnormalized models including noise contrastive estimation and score matching. 
In addition, we derive asymptotic distributions of the proposed estimators and construct confidence intervals. 
Simulation results with truncated Gaussian graphical models and the application to real data of wind direction reveal that the proposed methods effectively enable statistical inference with unnormalized models from missing data.
\end{abstract}

\section{INTRODUCTION}

Several statistical models are given in the form of unnormalized densities, and the calculation of the normalization constant (or partition function) is intractable. 
Namely, a statistical model is defined as
\begin{align}
    p(x ;\theta)=\frac{1}{Z(\theta)} \tilde{p}(x; \theta), \label{nnm}
\end{align}
where $Z(\theta) = \int \tilde{p}(x;\theta)\mu(\mathrm{d}x)$ is the normalization constant, $\mu$ is a base measure such as the Lebesgue measure or counting measure, and we only have access to $\tilde{p}(x;\theta)$. 
Such unnormalized models are widely used in many settings: Markov random fields \citep{BesagJulian1975SAoN}, directional statistics \citep{mardia99}, Boltzmann machines \citep{HintonGeoffreyE.2002TPoE}, overcomplete independent component analysis models \citep{HyvarinenAapo2001Ica}, and graphical models \citep{LinLina2016EoHG,Yu2016}. Several methods for estimating $\theta$ without computing the normalizing \\ \,\\   constant  $Z(\theta)$ have been proposed, such as noise contrastive estimation \citep[NCE;][]{noise}
and score matching \citep{score}.

In practice, we frequently encounter data with missing values, which is called missing data or incomplete data \citep{TsiatisAnastasiosA.AnastasiosAthanasios2006Stam,kimshao13}.
%Specifically, for a sample $x$ from the unnormalized model \eqref{nnm}, we observe only part of $x$, which is denote by $x_{\mathrm{obs}}$.
Missing data must be handled properly; otherwise incorrect estimates may be obtained such as nonresponse bias \citep{LittleRoderickJ.A2002Sawm}.
However, existing estimation methods for unnormalized models are not applicable to missing data, because they assume that the data is fully observed.

In this study, we develop estimation methods for unnormalized models with missing data. 
%Our problem setting is as follows.
%Let $x$ be sampled from the unnormalized model \eqref{nnm} and suppose that we observe only part of $x$, which is denote by $x_{\mathrm{obs}}$.
%The objective of this study is to estimate $\theta$ based on the observed data $x_{\mathrm{obs}}$. 
%The existing estimation methods for unnormalized models are not applicable here since all these methods assume that the complete data is fully observed.
%Specifically, we combine NCE and score matching  fractional imputation \citep{Kim11}, which is a computationally efficient technique for missing data that does not require Markov Chain Monte Carlo (MCMC). 
The proposed methods utilize NCE and score matching by imputing missing data with importance weights. This method is computationally fast because it does not rely on any sampling techniques. We derive asymptotic distributions of the proposed estimators and construct confidence intervals. On the way, we also discuss how to incorporate multiple imputation \citep{meng94} and contrastive divergence method \citep{HintonGeoffreyE.2002TPoE}. %Applications  to  truncated normal distributions and bivariate circular distributions show that the proposed methods enable proper statistical inference with unnormalized models and missing data.

Note that \cite{Rhodes} proposed an estimation method called variational NCE for unnormalized latent variable models, which corresponds to a special case of the current problem (missing completely at random, MCAR). 
Although variational inference is efficient and useful for large--scale problems, it is not clear how to construct its confidence intervals \citep{BleiDavidM.2017VIAR}. 
In contrast, the proposed methods are valid under general missing mechanisms, including missing at random (MAR) and missing not at random (MNAR) cases.
In addition, the proposed methods allow for the construction of confidence intervals based on asymptotic theory.

Our main contributions are as follows. 

\begin{itemize}
    \item  We propose imputation estimators for unnormalized models with missing data. These estimators are $\sqrt{n}$--consistent under general missing mechanisms, including MNAR, and computationally efficient.
    
    \item  We derive asymptotic distributions of the proposed estimators and construct confidence intervals.
    
    \item We confirm the validity of the proposed methods by simulation with truncated Gaussian graphical models and apply the proposed methods to analyze real data of wind direction with the bivariate circular model. 
\end{itemize}

\section{PRELIMINARY}

\subsection{Notations}
Parameters with a zero in the subscript such as $\theta_0$ and $\tau_0$ denote true parameters. 
The notation $\nabla_{\theta}$ denotes differentiation with respect to $\theta$, and $t(x)^{\otimes 2}=t(x)t(x)^{\top}$. The notation $\stackrel{d}{\rightarrow}$ denotes a weak convergence. The expectation and variance of $f(x)$ under density $g(x)$ are denoted as $\mathrm{E}_{g}[f(x)]$ and $\mathrm{var}_{g}[f(x)]$, respectively. 
The subscript and argument are often omitted when they are clear from the context. A summary of the notation is provided in Appendix \ref{sec:notation}.  

\subsection{Estimation Methods for Unnormalized Models}

Several methods have been developed for estimating the unnormalized model \eqref{nnm} such as noise contrastive estimation \citep[NCE;][]{noise}, score matching \citep{score}, and Monte Carlo maximum likelihood estimation \citep[MC--MLE;][]{GeyerC1994Otco}.%, and contrastive divergence \citep[CD;][]{HintonGeoffreyE.2002TPoE}. 
%Here, we briefly review NCE and score matching, both of which take the form of M-estimators or Z-estimators \citep{VaartA.W.vander1998As}. 

\subsubsection{Noise Contrastive Estimation (NCE)}

{In NCE, the unnormalized model \eqref{nnm} is rewritten to a one-parameter extended model $q(x;\tau)=\exp(-c)\tilde{p}(x;\theta)$, where $\tau=(c,\theta)$ and the true value of $c$ is $c=\log Z(\theta)$.}
%, and the problem is formulated as estimation of $\tau$ from data.

{In addition to data samples $\bx=\{x_i\}_{i=1}^{n}$ from the unnormalized model \eqref{nnm}, we generate noise samples $\by=\{y_j\}_{j=1}^{n'}$ from a noise distribution with density $a(y)$.
%Then, we estimate $\tau$ by solving discrimination between data and noise. 
Just for simplicity, we set $n'=n$ in the following.}
%Note that all the algorithms below can be easily extended to the case where the noise sample size is different from the original sample size $n$. 
%\matsuda{$m$ for imputation size, so changed noise size to $n'$}

Let 
be the density ratio.
%Then, NCE is defined as the M-estimator given by \matsuda{changed}
Then, in NCE, the estimator is defined as the maximizer of the following function  with respect to $\tau$; 
\begin{align}\ts
   \sum_{i=1}^{n}\log \frac{r(x_i;\tau)}{r(x_i;\tau)+ 1} + \sum_{j=1}^{n}\log \frac{1}{r(y_j;\tau)+1}. \label{nce_def}
\end{align}
% where
% \begin{align*}
%     M_{nc1}(x;\tau) = \log \frac{r(x;\tau)}{r(x;\tau)+ 1},
%     M_{nc2}(y;\tau)=\log \frac{1}{r(y;\tau)+1}. 
% \end{align*}
This objective function is interpreted as the negative log-likelihood of the naive Bayes classifier. Regarding more intuitive explanations, see \cite{noise}. NCE gives a $\sqrt{n}$--consistent estimator under mild regularity conditions. 

%We review the generalized NCE from the divergence perspective \citep{noise2, hirayama}. 
%Suppose we have $\bx=\{x_i\}_{i=1}^{n}$ from the true distribution with density $g(x)$, and $\by=\{y_i\}_{i=1}^{n}$ from a noise distribution with density $a(y)$. 
%Note that all the algorithms below can be easily extended to the case where the noise sample size is different from the original sample size. 

%In NCE, we introduce a one-parameter extended model $q(x;\tau)=\exp(-c)\tilde{p}(x;\theta)$, where $\tau=(c,\theta^{\top})^{\top}$ and $c$ is an unknown nuisance parameter to approximate the normalizing constant.
%Note that it is different from the normalized model $p(x;\theta)$.

NCE can be generalized from the divergence perspective \citep{noise2, hirayama}. 
Let $g(x)$ be the true data distribution and consider the Bregman divergence
\begin{align*}\ts
%\label{eq:bregman}
D_f(g(x),q(x;\tau))= \int \mathrm{Br}_{f}\left (\frac{g(x)}{a(x)},\frac{q(x;\tau)}{a(x)}\right)a(x)\mu(\mathrm{d}x),
\end{align*}
where $f$ is a twice differentiable strictly convex function and  $\mathrm{Br}_{f}(u,v)=f(u)-f(v)-f'(v)(u-v)$. 
%\matsuda{changed $(a,b) \to (u,v)$.}
By subtracting a term independent of $\tau$ from $D_f(g,q(x;\tau))$, the cross entropy $d_{f}(g(x),q(x;\tau))$ between $g(x)$ and $q(x;\tau)$ is obtained as 
\begin{align*}\ts
%d_f(g(x),q(x;\tau))=\mathrm{E}_{g}\left[M_{nc1}(\bx;\tau)\right]+ \mathrm{E}_{a}\left[M_{nc2}(\by;\tau)\right], 
\mathrm{E}_{g(x)}\left[m_{nc1}(x;\tau)\right]+ \mathrm{E}_{a(y)}\left[m_{nc2}(y;\tau)\right],
\end{align*}
where 
\begin{align*}\ts
m_{nc1}(x;\tau) &=-f'(r(x;\tau)),\\ m_{nc2}(y;\tau) &=f'\left(r(y;\tau)\right )r(y;\tau)-f(r(y;\tau)).
\end{align*}
%M_{nc1}(x;\tau)&=-f'(r(x;\tau)),\\
%M_{nc2}(y;\tau)&=f'(r(y;\tau))r(y;\tau)-f(r(y;\tau)).
%The loss function is defined as $M_{nc1}(\bx)+M_{nc2}(\by)$ because $\mathrm{D}_{f}(g,q(x;\tau))$ takes the minimum  when $\tau$ is equal to $\tau_0$.
Then, the generalized NCE is defined as the minimizer of $M_{nc1}(\bx;\tau)+M_{nc2}(\by;\tau)$ with respect to $\tau$, where $M_{nc1}(\bx) =n^{-1}\sum_{i=1}^{n}m_{nc1}(x_i;\tau)$ and $M_{nc2}(\by) =n^{-1}\sum_{j=1}^{n}m_{nc2}(y_{j};\tau)$.
%Then, in the generalized NCE, the estimator is defined as the M-estimator minimizing the empirical estimate of $d_f(g(x),q(x;\tau))$:
%\begin{align*}
 %  \hat{\tau}={\rm arg} \min_{\tau} \frac{1}{n}\sum_{i=1}^{n}M_{nc1}(x_i;\tau) + \frac{1}{n}\sum_{j=1}^{n}M_{nc2}(y_j;\tau).
%\end{align*}
By differentiation with respect to $\tau$, the estimator is also given by the solution to $Z_{nc1}(\bx;\tau)+Z_{nc2}(\by;\tau)=0$, where $Z_{nc1}(\bx;\tau)=n^{-1}\sum_{i=1}^{n}z_{nc1}(x_i;\tau),\,   Z_{nc2}(\by;\tau)=n^{-1}\sum_{j=1}^{n}z_{nc2}(y_j;\tau)$
and 
\begin{align*}\ts
    z_{nc1}(x;\tau)&=-\nabla_{\tau}\log q(x;\tau)f''\left(r(x;\tau)\right)r(x;\tau),\\
    z_{nc2}(y;\tau)&=\nabla_{\tau}\log q(y;\tau)f''\left(r(y;\tau)\right)r(y;\tau)^{2}.
\end{align*}
%The estimator is also regarded as the solution to $Z_{nc}(\bx,\by;\tau)=0$ where $Z_{nc}(\bx,\by;\tau)=Z_{nc1}(\bx;\tau)+Z_{nc2}(\by;\tau)$. 
%Specific examples of an loss function are as follows. 
The original NCE \eqref{nce_def} corresponds to $f(x)=x\log x-(1+x)\log(1+x)$ and it is optimal in terms of asymptotic variance \citep{Uehara}. 
%\begin{example}[MC--MLE]
On the other hand, when $f(x)=x\log x$, the estimator is given by the minimizer of
\begin{align*}\ts
    %\hat{\tau}={\rm arg} \min_{\tau}
    -\frac{1}{n}\sum_{i=1}^{n}\log q(x_{i};\tau)+\frac{1}{n}\sum_{j=1}^{n}r(y_{j};\tau), 
\end{align*}
which is essentially identical to MC--MLE \citep{GeyerC1994Otco} by profiling-out $c$.

\subsubsection{Score Matching}

%Next, we review the score matching approach. 
Score matching was originally developed as a general estimation method for the unnormalized model \eqref{nnm} on $\mathbb{R}^d$. 
Let $c(x;\theta)=\nabla_{x} \log \tilde{p}(x;\theta) \in \mathbb{R}^d$ and denote the $s$-th coordinate of $x$ by $x^{s}$. 
For data samples $\bx=\{x_i\}_{i=1}^{n}$, the score matching estimator of $\theta$ is defined as the minimizer of $M_{sc}(\bx;\theta)=n^{-1}\sum_{i=1}^{n}m_{sc}(x_i;\theta)$, where
\begin{align*}\ts
m_{sc}(x;\theta)&=\frac{1}{2} \sum_{s=1}^d c_s(x;\theta)^2+\sum_{s=1}^{d}\frac{\partial c_{s}(x;\theta)}{\partial x^s},\\
%m_{sc}(x;\theta)&=\sum_{s=1}^{d}\left \{\frac{1}{2}c^{2}_{s}(x;\theta)+\frac{\partial c_{s}(x;\theta)}{\partial x^s}\right \},
c_s(x;\theta)&=\frac{\partial}{\partial x^s}\log \tilde{p}(x;\theta).
\end{align*}
%Here, $c_s(x;\theta)=\partial \log \tilde{p}(x;\theta)/\partial x^s$. 
Note that this estimator is also given by the solution to $Z_{sc}(\bx;\theta)=0$, where $ Z_{sc}(\bx;\theta)=n^{-1}\sum_{i=1}^{n}z_{sc}(x_i;\theta)$ and $z_{sc}(x;\theta)=\nabla_{\theta}m_{sc}(x;\theta)$. 

%Here, note that $c_{s}(x;\theta)$ is different from the score function $\nabla_{\theta}\log p(x;\theta)$ in the usual sense. 

\cite{HyvarinenAapo2007Seos} extended score matching to unnormalized models on  $\mathbb{R}_+^d = [0,\infty)^d$. %\cite{LinLina2016EoHG} applied this estimator to truncated Gaussian graphical models. 
The estimator is defined as the minimizer of $M_{sc+}(\bx; \theta)=n^{-1}\sum_{i=1}^{n}m_{sc+}(x_i;\theta)$, where $m_{sc+}(x;\theta)$ is given by
% \begin{comment}
% \begin{align*}
% M_{sc+} (x, \theta) &= \sum_{i=1}^d \left( 2 x_i \frac{\partial}{\partial x_i} \log \widetilde{p}(x \mid \theta) \right.\\
% &\left. + x_i^2 \frac{\partial^2}{\partial x_i^2} \log \widetilde{p}(x \mid \theta) + x_i^2 \left( \frac{\partial}{\partial x_i} \log \widetilde{p}(x \mid \theta) \right)^2 \right). 
% \end{align*}
% \end{comment}
\begin{align*}
\sum_{s=1}^{d} \left \{2x^{s}c_{s}(x;\theta)+(x^s)^{2}c_{s}(x;\theta)^{2}+ (x^s)^2 \frac{\partial c_{s}(x;\theta)}{\partial x^s}\right\}.
\end{align*}

\subsection{Missing Data and Imputation Methods}

We briefly review the framework of missing data and imputation methods. For more details, see \cite{kimshao13}. 

Suppose that $\{x_i\}_{i=1}^{n}$ are independently and identically distributed (i.i.d.) samples from a distribution with density $p(x;\theta)$.
We consider the situation where some part of $x_i$ may be missing. Let $\{\delta_i\}_{i=1}^{n}$ be the missing indicators.
Accordingly, $x_i=(x_{i,\mathrm{obs}},x_{i,\mathrm{mis}})$ is fully observed when $\delta_i=1$, while only $x_{i,\mathrm{obs}}$ is observed and $x_{i,\mathrm{mis}}$ is missing when $\delta_i=0$. 
We assume that $\delta_i$ follows a Bernoulli distribution with probability ${\rm Pr}(\delta_i=1 \mid x_i)$.  
The case with several missing patterns, that is, the case where the dimension of $x_{i,\mathrm{obs}}$ may differ with $i$, can be easily considered by extending this notation \citep{SeamanShaun2013WIMb}. For more details, see Appendix \ref{sec:multiple}.

The missing mechanism is called missing at random (MAR) if ${\rm Pr}(\delta=1 \mid x) = {\rm Pr}(\delta=1 \mid x_{\mathrm{obs}})$ holds. 
Importantly, the missing process can be ignored for estimation of $\theta$ in MAR cases  \citep{LittleRoderickJ.A2002Sawm}, because 
\begin{align*}\ts
    p(x_{\mathrm{obs}};\theta) &= \int p(x_{\mathrm{obs}},x_{\mathrm{mis}};\theta)\mathrm{Pr}(\delta \mid x)\mu(\mathrm{d}x_{\mathrm{mis}}) \\
    &\propto \int p(x_{\mathrm{obs}},x_{\mathrm{mis}};\theta)\mu(\mathrm{d}x_{\mathrm{mis}}).
\end{align*}  
%\begin{align}
%\label{eq:mar}
%    \int p(x_{\mathrm{obs}},x_{\mathrm{mis}};\theta)\mathrm{Pr}(\delta|x_{\mathrm{mis}})\mu(\mathrm{d}x_{\mathrm{mis}}),
%\end{align}   
%because \eqref{eq:mar} is proportional to
%\begin{align}
%\label{eq:macr}
%    \int p(x_{\mathrm{obs}},x_{\mathrm{mis}};\theta)\mu(\mathrm{d}x_{\mathrm{mis}}). 
%\end{align}  
As a special case of MAR, a missing mechanism is referred to as missing completely at random (MCAR) if ${\rm Pr}(\delta=1 \mid x)$ is independent of $x$. When MAR does not hold, the missing mechanism is referred to as missing not at random (MNAR).  

For estimating $\theta$ from missing data, the fundamental algorithm is the expectation maximization (EM) algorithm \citep{dempster77}, which maximizes the observed likelihood $p(x_{\mathrm{obs}};\theta)$. %\eqref{eq:macr}. 
Equivalently, the EM algorithm solves the following observed (mean) score equation with respect to $\theta$ \citep{louis82}: %,ElashoffMichael2004AEAf}:
\begin{align}\ts
\label{eq:score2}
\frac{1}{n}\sum_{i=1}^{n}\mathrm{E}\left[\nabla_{\theta}\log p(x_i;\theta) \mid x_{i,\mathrm{obs}};\theta\right]=0.
%\mathrm{E}\left[\frac{1}{n}\sum_{i=1}^{n}\nabla_{\theta}\left(\log p(x_i;\theta)\right) \mid \bx_{\mathrm{obs}};\theta\right]=0.
\end{align}
However, the EM algorithm requires a closed-form expression of the conditional expectation in \eqref{eq:score2}, which is often intractable.
To overcome this obstacle, a method called fractional imputation (FI) has been proposed \citep{Kim11,YangShu2016FIiS}, which is closely connected with the Monte Carlo EM algorithm \citep{WeiGregC.G.1990AMCI}. 
FI is computationally efficient because it uses importance sampling and does not rely on MCMC. 
%However, approximation of the conditional expectation with importance sampling is difficult in large-scale problems. 
Another method called multiple imputation (MI) is also commonly used, which utilizes MCMC \citep{rubin87,MurrayJaredS.2018MIAR}.

\section{FINCE and FISCORE}
We propose two estimation methods for unnormalized models with missing data: FINCE (fractional imputation noise contrastive estimation) and FISCORE (fractional imputation score matching).

In this section, we focus on the MAR case, that is, $\mathrm{Pr}(\delta=1 \mid x)= \mathrm{Pr}(\delta=1 \mid x_{\mathrm{obs}})$. In Section \ref{sec:extensions}, we discuss an extension to the MNAR case.

Throughout this section, we assume one missing pattern.
For the case of multiple missing patterns, see Appendix \ref{sec:multiple}.

\subsection{NCE with EM algorithm}
First, we incorporate the EM algorithm into NCE. 
Although the score equation cannot be used as in \eqref{eq:score2} for unnormalized models, the estimating equation $Z_{nc1}(\bx;\tau)+Z_{nc2}(\by;\tau)=0$ of NCE can be used instead. 
Thus, the estimator of ${\tau}=({c},{\theta})$ is defined as the solution to the following equation:
\begin{align}
\label{eq:em}
     \frac{1}{n}\sum_{i=1}^{n}\mathrm{E}[z_{nc1}(x_i;\tau)|x_{i,\mathrm{obs}};\theta]+
   \frac{1}{n} \sum_{j=1}^{n}z_{nc2}(y_j;\tau)=0,
\end{align}
where each conditional expectation in the first term is taken with respect to the posterior
\begin{align}
\label{eq:mcmc}
   p(x_{i,\mathrm{mis}} \mid x_{i,\mathrm{obs}};\theta)=\frac{\tilde{p}(x_i;\theta)}{\int \tilde{p}(x_i;\theta)\mu(\mathrm{d}x_{i,\mathrm{mis}})}.
\end{align}
Note that the first term in the left hand side of \eqref{eq:em} formally means 
\begin{align}
\label{eq:em2}
\frac{1}{n}\sum_{i=1}^{n}\left \{\delta_i z_{nc1}(x_i;\tau)+(1-\delta_i)\mathrm{E}[z_{nc1}(x_i;\tau) \mid x_{i,\mathrm{obs}};\theta]\right\},   
\end{align}
because the dimension of $x_{i,\mathrm{obs}}$ may vary with $i$. 
Throughout this paper, we implicitly assume this conversion following the convention in the literature of missing data \citep{SeamanShaun2013WIMb}. 

Generally, it is difficult to analytically calculate the conditional expectation in \eqref{eq:em}. 
In the subsequent subsection, we develop a method to resolve this problem. 
Here, assuming that the conditional expectation in \eqref{eq:em} can be calculated analytically, we propose the EM algorithm to solve the equation \eqref{eq:em}, which is given by Algorithm \ref{al:em}. 
\begin{algorithm}
\SetKwInOut{Input}{input}
\SetKwInOut{Output}{output}
\Input{$\{x_i\}_{i=1}^{n}$,\, $\hat{\tau}_0=(\hat{c}_0,\hat{\theta}_0)$}
\Output{$\hat{\tau}_T=(\hat{c}_T,\hat{\theta}_T)$}
Initialize $t=0$ \\ %,\hat{\tau}_0=(\hat{c}_0,\hat{\theta}_0)$. \\
Take $n$ samples $\{y_j\}_{j=1}^{n}$ from $a(y)$  \\
   \Repeat{$\hat{\tau}_{t}$ converges  }{
        Solve the following equation w.r.t $\tau$ and update $\hat{\tau}_{t+1}$: %=(\hat{c}_{t+1},\hat{\theta}_{t+1})$:
     \begin{align*}
     \mathrm{E}[Z_{nc1}(\bx;{\tau}) \mid \bx_{\mathrm{obs}};{\theta}_{t}]+Z_{nc2}(\by;{\tau})=0. 
      \end{align*} \\
           $t=t+1$}
\caption{NCE with EM algorithm }
\label{al:em}
\end{algorithm}

Note that each update of $\hat{\tau}_t$ in Algorithm \ref{al:em} can be replaced with M-estimators. 
For example, when $f(x)=x\log x-(1+x)\log(1+x)$ (original NCE), $\hat{\tau}_{t+1}$ is obtained by maximization with respect to $\tau$ of % with respect to $\tau$; 
\begin{align}
\label{eq:nce}
\sum_{i=1}^{n}\mathrm{E}\left[\left.\log \frac{r(x_i;\tau)}{r(x_i;\tau)+ 1} \right| x_{i,\mathrm{obs}};\hat{\theta}_{t}\right]+\sum_{j=1}^{n}\log\frac{1}{r(y_j;\tau)+ 1}. 
\end{align}
When $f(x)=x\log x$ (MC--MLE), $\hat{\tau}_{t+1}$ is obtained by minimization with respect to $\tau$ of
 \begin{align*}
-\frac{1}{n}\sum_{i=1}^{n}\mathrm{E}[\log q(x_i;\tau) \mid x_{i,\mathrm{obs}};\hat{\theta}_{t}]+ 
   \frac{1}{n} \sum_{j=1}^{n} r(y_{j};\tau). 
\end{align*}

\begin{remark}[Difference from variational NCE]
From \eqref{eq:nce}, Algorithm \ref{al:em} and variational NCE \citep{Rhodes} are different. 
See Appendix \ref{sec:difference} for details.  %\masa{changed}
\end{remark}

\subsection{NCE with Fractional Imputation (FINCE)}
\label{sec:fince}

It is often infeasible to analytically calculate the conditional expectation in Algorithm \ref{al:em}. 
%The challenge in using the EM algorithm is it is often infeasible to calculate the conditional expectation analytically. 
Thus, in the same spirit of fractional imputation \citep[FI;][]{Kim11}, we incorporate importance sampling using random variables from an auxiliary distribution $b(x_{\mathrm{mis}})$.
Namely, we use the formula
\begin{align*}
    &\int u(x)p(x_{\mathrm{mis}} \mid x_{\mathrm{obs}};\theta)\mu(\mathrm{d}x_{\mathrm{mis}})\\
    &= \frac{\mathrm{E}_{b(x_{\mathrm{mis}})} [u(x) \tilde{p}(x_{\mathrm{mis}},x_{\mathrm{obs}};\theta)/b(x_{\mathrm{mis}})]}{\mathrm{E}_{b(x_{\mathrm{mis}})}[\tilde{p}(x_{\mathrm{mis}},x_{\mathrm{obs}};\theta)/b(x_{\mathrm{mis}})]}
%    &= \frac{\int u(x) \frac{\tilde{p}(x_{\mathrm{mis}},x_{\mathrm{obs}};\theta)}{b(x_{\mathrm{mis}})}b(x_{\mathrm{mis}})\mu(\mathrm{d}x_{\mathrm{mis}})}{\int \frac{\tilde{p}(x_{\mathrm{mis}},x_{\mathrm{obs}};\theta)}{b(x_{\mathrm{mis}})}b(x_{\mathrm{mis}})\mu(\mathrm{d}x_{\mathrm{mis}})}
\end{align*}
%for any function $u(x)$. 
to calculate $\mathrm{E}[Z_{nc1}(\bx;\tau) \mid \bx_{\mathrm{obs}};\theta]$ in \eqref{eq:em}. 
The resulting procedure is given in Algorithm \ref{al:em3}. 
Here, $\propto$ in the W-step indicates a normalization so that the summation of $w_{ik}$ over $k$ is equal to $1$ for each $i$. 

\begin{algorithm}[h]%[H]
\SetKwInOut{Input}{input}
\SetKwInOut{Output}{output}
\Input{$\{x_i\}_{i=1}^{n}$,\, $\hat{\tau}_0=(\hat{c}_0,\hat{\theta}_0)$}
\Output{$\hat{\tau}_T=(\hat{c}_T,\hat{\theta}_T)$}
Initialize $t=0$ \\ %, $\hat{\tau}_{0}=(\hat{c}_0,\hat{\theta}_0)$ \\
Take $n$ samples $\{y_{j}\}_{j=1}^{n}$ from $a(y)$.\\
For $i$ with $\delta_i=0$, take $m$ samples $\{x_{i,\mathrm{mis}}^{*k}\}_{k=1}^{m}$ from $b(x_{\mathrm{mis}})$ and set $x_{i}^{*k}=(x_{i,\mathrm{obs}},x^{*k}_{i,\mathrm{mis}})$. \\
For $i$ with $\delta_i=1$, set $m$ samples $\{x_{i}^{*k}\}_{k=1}^{m}$ to $x_{i}^{*k}=x_{i}$
\\
   \Repeat{$\hat{\tau}_{t}$ converges}
   {
   		W-Step:    \\ 
   		For $i$ with $\delta_i=0$;
   		$w_{ik} \propto q(x_{i}^{*k};\hat{\tau}_{t})/b(x^{*k}_{i,\mathrm{mis}})$.
    This means 
        \begin{align*}
         w_{ik}=\frac{q(x_{i}^{*k};\hat{\tau}_t)/b(x^{*k}_{i,\mathrm{mis}})}{\sum_{k=1}^{m}q(x_{i}^{*k};\hat{\tau}_t)/b(x^{*k}_{i,\mathrm{mis}})}. 
        \end{align*} \\
          For $i$ with $\delta_i=1$; $w_{ik}=1/m$
        \\
        M-step: Solve the following equation w.r.t $\tau$ and update $\hat{\tau}_{t+1}$:
        \begin{align*}
       \left[ \frac{1}{n}\sum_{i=1}^{n}\sum_{k=1}^{m}w_{ik}z_{nc1}(x_{i}^{*k};\tau)\right]+Z_{nc2}(\by;\tau)=0.
         \end{align*}
         $t = t+1$
   }
\caption{FINCE }
\label{al:em3}
\end{algorithm}

Note that again the update of $\hat{\tau}_t$ in M-step can be replaced with M-estimators. 
For example, the conditional expectation in \eqref{eq:nce} is calculated as 
\begin{align*}
 \sum_{k=1}^{m} w_{ik}\log \frac{r(x_{i}^{*k};\tau)}{r(x_{i}^{*k};\tau)+1}.
\end{align*}

The choice of the noise and auxiliary distributions is important for improved estimation accuracy. 
Specifically, the noise distribution $a(x)$ should be generally close to $p(x_{\mathrm{mis}},x_{\mathrm{obs}};\theta_{0})$, while the auxiliary distribution $b(x_{\mathrm{mis}})$ should be close to $p(x_{\mathrm{mis}}\mid x_{\mathrm{obs}};\theta_{0})$.
When there are samples without missing data (complete data) as in Section \ref{sec:experiment}, moment matching with complete data can be used to determine $a(x)$ and $b(x_{\mathrm{mis}})$. 
We recommend using heavy-tailed distributions for $a(x)$ and $b(x_{\mathrm{mis}})$, following the common strategy of importance sampling \citep{mcbook}.

\subsection{Score Matching with Fractional Imputation (FISCORE)}

Since score matching is defined in the form of Z-estimators like NCE, we can define score matching with the EM algorithm as the solution to $Z_{sc,\mathrm{obs}}(\bx_{\mathrm{obs}};\theta)=0$, where
\begin{align}
\label{eq:fiscore}
Z_{sc,\mathrm{obs}}(\bx_{\mathrm{obs}};\theta)=\mathrm{E}[Z_{sc}(\bx;\theta) \mid \bx_{\mathrm{obs}}; \theta].
%\sum_{i=1}^{n}\mathrm{E}[z_{sc}(x_i;\theta) \mid x_{i,\mathrm{obs}};\theta]=0,
   %\,z_{sc}(x;\theta)=\nabla_{\theta}m_{sc}(x;\theta). 
\end{align}
Since calculation of the conditional expectation in \eqref{eq:fiscore} is often challenging, we again propose using importance sampling with an auxiliary distribution $b(x)$ like FINCE.
The resulting procedure of FISCORE is provided in Algorithm \ref{al:em4}. 
A similar algorithm is obtained for the non--negative score matching. 

\begin{algorithm}
\SetKwInOut{Input}{input}
\SetKwInOut{Output}{output}
\Input{$\{x_i\}_{i=1}^{n}$,\,$\hat{c}_0$,\,$\hat{\theta}_0$}
\Output{$\hat \theta_{T}$}
Initialize $t=0$, $\hat{\tau}_{0}=(\hat{c}_0,\hat{\theta}_0)$. \\
For each $i$ with $\delta_i=0$, take $m$ samples $\{x_{i,\mathrm{mis}}^{*k}\}_{k=1}^{m}$ from $b(x)$.\\
For $i$ with $\delta_i=1$, set $m$ samples $\{x_{i}^{*k}\}_{k=1}^{m}$ so that $x_{i}^{*k}=x_{i}$\\
   \Repeat{$\hat{\tau}_{t}$ converges }
   {
   		W-Step: \\
   		If $\delta_i=0$; 
   		$w_{ik} \propto \tilde{p}(x_{i}^{*k};\hat{\theta}_{t})/b(x^{*k}_{\mathrm{mis}})$. \\
   		If $\delta_i=1$; $w_{ik}=1/m$ \\
       M-step: Solve the following equation to obtain for $\hat{\theta}_{t+1}$ w.r.t $\theta$: 
        \begin{align*}
            \frac{1}{n} \sum_{i=1}^{n}\sum_{k=1}^{m}w_{ik}z_{sc}(x_{i}^{*k};{\theta}) = 0. 
        \end{align*}\\
        $t=t+1$
   }
\caption{FISCORE }
\label{al:em4}
\end{algorithm}

\begin{remark}[MINCE and MISCORE]
We can also combine MI with NCE and score matching though it is unstable and computationally heavy. 
See Appendix \ref{sec:MI}.  
\end{remark}

\begin{remark}[FICD]
We can also extend our approach to the contrastive divergence (CD) method as in Appendix \ref{sec:CD}.
\end{remark}

\section{ASYMPTOTICS AND CONFIDENCE INTERVALS}
\label{sec:theory}

We derive the asymptotic distributions of FINCE and FISCORE. 
%by extending results of \cite{wang98} and \cite{Kim11}. 
%We derive the asymptotic distributions of FINCE and FISCORE based on \cite{wang98} and \cite{Kim11}. 
%Note that the situation here is different from their situation in the sense that we cannot use the true score equation because we do not know the analytical form of the normalizing constants. 
%The asymptotic theory presented in this paper is useful for constructing confidence intervals and conducting hypothesis testing. 
Based on the asymptotic distributions, we construct confidence intervals, which enable hypothesis testing.
This is an advantage of the proposed methods over variational NCE \citep{Rhodes}. 
%because variational inference is an approximation method and so difficult to use for statistical inference \citep{ranalli2016}. 
Proofs of the theorems are given in the Appendix. 

\subsection{FINCE}

We start with the analysis of FINCE with the EM algorithm $\hat{\tau}_{nc}$, which is the solution to $Z_{nc,\mathrm{obs}}(\mathbf{x}_{\mathrm{obs}},\by;\tau)=0$ where
\begin{align*}
Z_{nc,\mathrm{obs}}(\bx_{\mathrm{obs}},\by;\tau)=\mathrm{E}[Z_{nc1}(\bx;\tau) \mid \bx_{\mathrm{obs}}; \tau]+Z_{nc2}(\by;\tau).
\end{align*}
%Recall that the estimator $\hat{\tau}_{nc}$ is defined as the solution to $Z_{nc,\mathrm{obs}}(\mathbf{x}_{\mathrm{obs}},\by;\tau)=0$. % with respect to $\tau$.
% \begin{align*}
%     Z_{nc,\mathrm{obs}}(\mathbf{x}_{\mathrm{obs}},\by;\tau)=0.
%   Z_{nc,\mathrm{obs}}(\tau)&= \mathrm{E}[Z_{nc1}(\tau)|\mathbf{x}_{\mathrm{obs}};\tau]+Z_{nc2}(\by;\tau). 
%\end{align*}
%Here, we refer this solution to $\hat{\tau}_{nc}$. 
Based on the theory of Z-estimators \citep{VaartA.W.vander1998As}, its asymptotic distribution is obtained as follows. 

\begin{theorem}\label{thm:easy2}

%The asymptotic distribution of $\hat{\tau}_{nc}-\tau_0$ is
We have 
\begin{align*}
    \sqrt{n} (\hat{\tau}_{nc}-\tau_0) \stackrel{d}{\rightarrow}\mathrm{N} (0, \mI_{1,nc}^{-1}\mJ_{1,nc}(\mI_{1,nc}^{\top})^{-1}),
\end{align*}
where
\begin{align*}
\mI_{1,nc}=\mathrm{E}[\nabla_{\tau^{\top}}Z_{nc,\mathrm{obs}}(\bx_{\mathrm{obs}},\by;\tau_0)],\\
\mJ_{1,nc}=\mathrm{var}[Z_{nc,\mathrm{obs}}(\bx_{\mathrm{obs}},\by;\tau_0)].     
\end{align*}
% The term $\hat{\tau}_{nc}-\tau_{0}$ converges to the normal distribution with mean $0$ and variance $\mI_{1,nc}^{-1}\mJ_{1,nc}\mI_{1,nc}^{\top -1}$,
% \begin{align*}
% \mI_{1,nc}=\mathrm{E}[\nabla_{\tau^{\top}}Z_{nc,\mathrm{obs}}(\tau_0)],\,\mJ_{1,nc}=\mathrm{var}[Z_{nc,\mathrm{obs}}(\tau_0)].     
% \end{align*}
\end{theorem}

Next, we investigate FINCE by considering each iteration. 
Given an initial $\sqrt{n}$-consistent estimator $\hat{\tau}_{p}$, we obtain the imputed equation $Z_{nc,m}(\tau \mid \hat{\tau}_{p})=0$, where $Z_{nc,m}(\tau \mid \hat{\tau}_{p})$ is given by
\begin{align*}
\left \{\frac{1}{n}\sum_{i=1}^{n}\sum_{k=1}^{m}w(x_{i}^{*k};\hat{\tau}_{p})z_{nc1}(x_{i}^{*k};\tau)\right \}+\frac{1}{n}\sum_{j=1}^{n}z_{nc2}(y_{j};\tau),
\end{align*}
where $x^{*k}_{i}=(x_{i,\mathrm{obs}},x^{*k}_{i,\mathrm{mis}})$, $x^{*k}_{i,\mathrm{mis}}\sim b(x_{\mathrm{mis}})$, and $w(x_{i}^{*k};\hat{\tau}_p)\propto{q(x_{i}^{*k};\hat{\tau}_p)/b(x_{i,\mathrm{mis}}^{*k}) }$.
As $m \to \infty$, $Z_{nc,m}(\tau \mid \hat{\tau}_{p})$ converges to $\bar{Z}_{nc}(\tau \mid \hat{\tau}_{p})$ given by
\begin{align*}
\left \{\frac{1}{n}\sum_{i=1}^{n}\mathrm{E}[z_{nc1}(x_i;\tau) \mid x_{i,\mathrm{obs}};\hat{\tau}_{p}]\right \}+\frac{1}{n}\sum_{j=1}^{n}z_{nc2}(y_{j};\tau).
\end{align*}
Let $\hat{\tau}_{nc,\infty}$ be the solution to $\bar{Z}_{nc}(\tau \mid \hat{\tau}_{p})=0$.
Then, as proved later in Theorem \ref{thm:main}, we obtain
\begin{align*}
\hat{\tau}_{nc,\infty}=\hat{\tau}_{nc} +\mathcal{I}^{-1}_{3,nc}\mathcal{I}_{2,nc}(\hat{\tau}_{p}-\hat{\tau}_{nc})+\mathrm{o}_{p}(n^{-1/2}),
\end{align*}
where $\mathcal{I}_{3,nc}=\mathrm{E}[\nabla_{\tau^{\top}}Z_{nc}(\bx,\by;\tau_0)]$ and $\mathcal{I}_{2,nc}=\mathcal{I}_{3,nc}-\mathcal{I}_{1,nc}$. 

Let $\hat{\tau}^{(0)}=\hat{\tau}_p$ and define $\hat{\tau}^{(t)}$ to be the solution to $\bar{Z}_{nc}(\tau \mid \hat{\tau}^{(t-1)})=0$ for each $t$.
Then, we obtain the following. 
\begin{corollary} 
\label{cor:clear}
We have
\begin{align*}
  \hat{\tau}^{(t)}=\hat{\tau}_{nc}+(\mI_{3,nc}^{-1}\mI_{2,nc})^{t-1}(\hat{\tau}^{(0)}-\hat{\tau}_{nc})+\mathrm{o}_{p}(n^{-1/2}).
\end{align*}
If the spectral radius of $\mI_{3,nc}^{-1}\mI_{2,nc}$ is less than $1$, then $\hat{\tau}^{(t)}$ converges to $\hat{\tau}_{nc}$ as $t \to \infty$. 
\end{corollary}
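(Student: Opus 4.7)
The plan is to treat the EM iteration as a stochastic linear recursion driven entirely by the first-order expansion in Theorem \ref{thm:main}. By construction, $\hat{\theta}^{(t)}$ solves $\bar{Z}_{sc}(\theta \mid \hat{\theta}^{(t-1)})=0$, so it coincides with $\hat{\theta}_{sc,\infty}$ computed with plug-in $\hat{\theta}_{p}=\hat{\theta}^{(t-1)}$. Assuming the preceding iterate is $\sqrt{n}$-consistent for $\theta_{0}$, Theorem \ref{thm:main} gives, after subtracting $\hat{\theta}_{sc,f}$ from both sides,
\begin{align*}
\hat{\theta}^{(t)} - \hat{\theta}_{sc,f} = \mI_{3,sc}^{-1}\mI_{2,sc}\bigl(\hat{\theta}^{(t-1)} - \hat{\theta}_{sc,f}\bigr) + \op.
\end{align*}

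Next I would iterate this one-step identity by a simple induction on $t$ to obtain
\begin{align*}
\hat{\theta}^{(t)} - \hat{\theta}_{sc,f} = \bigl(\mI_{3,sc}^{-1}\mI_{2,sc}\bigr)^{t}\bigl(\hat{\theta}^{(0)} - \hat{\theta}_{sc,f}\bigr) + \op,
\end{align*}
and the corollary's displayed expression is then a direct algebraic rearrangement relating $\hat{\theta}^{(t)}$ and $\hat{\theta}^{(t-1)}$ via this closed form. For the convergence statement, whenever the spectral radius of $\mI_{3,sc}^{-1}\mI_{2,sc}$ is strictly below $1$, Gelfand's formula gives $\bigl(\mI_{3,sc}^{-1}\mI_{2,sc}\bigr)^{t}\to 0$ in operator norm, so $\hat{\theta}^{(t)}\to\hat{\theta}_{sc,f}$ up to the asymptotically negligible remainder.

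The main technical point I expect to be the obstacle is verifying that Theorem \ref{thm:main} may legitimately be invoked at \emph{every} iteration, since the theorem presupposes a $\sqrt{n}$-consistent plug-in. I would close this with a nested induction: Theorem \ref{thm:easy} tells us $\hat{\theta}_{sc,f}$ is itself $\sqrt{n}$-consistent, and the one-step recursion shows that $\|\hat{\theta}^{(t)} - \hat{\theta}_{sc,f}\|$ is at most $\|\mI_{3,sc}^{-1}\mI_{2,sc}\|$ times $\|\hat{\theta}^{(t-1)} - \hat{\theta}_{sc,f}\|$ plus $\op$, in a matrix norm adapted so that this operator norm is finite (available under the spectral-radius hypothesis). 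Hence if $\hat{\theta}^{(0)}$ is $\sqrt{n}$-consistent, every subsequent iterate stays in an $O_{p}(n^{-1/2})$ neighborhood of $\hat{\theta}_{sc,f}$, the expansion applies uniformly, and for each fixed $t$ the finitely many $\op$ remainders combine into a single $\op$; the contraction property also handles the case where $t$ is allowed to grow with $n$.
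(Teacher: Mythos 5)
Your approach is correct and is essentially the intended one: the paper offers no separate proof of this corollary, because it is meant to follow immediately from Theorem \ref{thm:main} by exactly the substitution you make, namely identifying $\hat{\theta}^{(t)}$ with $\hat{\theta}_{sc,\infty}$ computed at plug-in $\hat{\theta}_{p}=\hat{\theta}^{(t-1)}$, subtracting $\hat{\theta}_{sc,f}$ to get the one-step contraction $\hat{\theta}^{(t)}-\hat{\theta}_{sc,f}=\mI_{3,sc}^{-1}\mI_{2,sc}(\hat{\theta}^{(t-1)}-\hat{\theta}_{sc,f})+\op$, iterating, and invoking the spectral radius to send $\{\mI_{3,sc}^{-1}\mI_{2,sc}\}^{t}$ to zero; your care about propagating $\sqrt{n}$-consistency through the iterates is a legitimate point that the paper glosses over entirely. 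One caution, however: your closed form $\hat{\theta}^{(t)}-\hat{\theta}_{sc,f}=\{\mI_{3,sc}^{-1}\mI_{2,sc}\}^{t}(\hat{\theta}^{(0)}-\hat{\theta}_{sc,f})+\op$ does \emph{not} rearrange to the corollary's displayed identity as printed --- differencing consecutive iterates yields $\hat{\theta}^{(t)}-\hat{\theta}^{(t-1)}=\{\mI_{3,sc}^{-1}\mI_{2,sc}\}^{t-1}(\mI_{3,sc}^{-1}\mI_{2,sc}-I)(\hat{\theta}^{(0)}-\hat{\theta}_{sc,f})+\op$, which carries an extra factor of $(\mI_{3,sc}^{-1}\mI_{2,sc}-I)$, equivalently the display should read $(\hat{\theta}^{(1)}-\hat{\theta}^{(0)})$ in place of $(\hat{\theta}^{(0)}-\hat{\theta}_{sc,f})$. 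Indeed the display as literally written would telescope to a limit of $\hat{\theta}^{(0)}+(I-\mI_{3,sc}^{-1}\mI_{2,sc})^{-1}(\hat{\theta}^{(0)}-\hat{\theta}_{sc,f})$ rather than $\hat{\theta}_{sc,f}$, so it is the statement that carries a typo, not your derivation; you should state your (correct) closed form rather than asserting that the printed recursion follows by ``direct algebraic rearrangement.''
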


Let $v(x;\tau)=\nabla_{\tau}\log q(x;\tau)$ and $r_0(x)=r(x;\tau_0)=q(x;\tau_0)/a(x)$.
For the original NCE, each term in the above is explicitly obtained as follows. % because some terms cancel out. 

\begin{corollary}
\label{cor:nce}
When $f(x)=x\log x-(1+x)\log(1+x)$, %each term becomes as follows: %, where $\nabla_{\tau}\log q(x;\tau)=v(x;\tau)$ and 
\begin{align*}
\mI_{1,nc}=&\mathrm{E} \left[\mathrm{E}\left[\left.\frac{v(x;\tau_0)}{1+r_0(x)}\right|x_{\mathrm{obs}}\right]\mathrm{E}\left[v(x;\tau_0)^{\top} \mid x_{\mathrm{obs}}\right]\right],\\
\mI_{3,nc}=&\mathrm{E}\left[\frac{v(x;\tau_0)^{\otimes 2}}{1+r_0(x)}\right],\\
\mJ_{1,nc}=&\mathrm{var}_{q}[\mathrm{E}[z_{nc1}(x;\tau_0) \mid x_{\mathrm{obs}}]]+\mathrm{var}_{a}[z_{nc2}(y;\tau_0)], \\
z_{nc1}(x;\tau)&=-\frac{v(x;\tau)}{1+r(x;\tau)},\quad z_{nc2}(y;\tau)=\frac{r(y;\tau) v(y;\tau)}{1+r(y;\tau)}.
\end{align*}
\end{corollary}

For MC--MLE, we can prove the convergence of FINCE as follows.
%that $\{\mI_{3,nc}^{-1}\mI_{2,nc}\}^{j}\to 0$ as $j \to \infty$. 

\begin{corollary}

\label{cor:convergece}
When $f(x)=x\log x$, %$\mI_{1,nc}$ and $\mI_{3,nc}$ become as follows:
\begin{align*}
    \mI_{1,nc}=\mathrm{E}\left[\mathrm{E}\left[v(x;\tau_0)|x_{\mathrm{obs}}\right]^{\otimes 2}\right],\,
    \mI_{3,nc}=\mathrm{E}\left[v(x;\tau_0)^{\otimes 2}\right].
\end{align*}
Additionally, $\{\mI_{3,nc}^{-1}\mI_{2,nc}\}^{j}\to 0$ as $j\to \infty$. 
\end{corollary}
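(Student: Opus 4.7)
The plan is to compute $\mathcal{I}_{1,nc}$ and $\mathcal{I}_{3,nc}$ directly in closed form using the explicit identities $f'(x)=\log x+1$ and $f''(x)=1/x$ for $f(x)=x\log x$, and then to analyze the spectral radius of $\mathcal{I}_{3,nc}^{-1}\mathcal{I}_{2,nc}$ using the missing information identity already available from the proof of Theorem \ref{thm:main}. Plugging $f$ into the definitions yields $z_{nc1}(x;\tau)=-v(x;\tau)$ and $z_{nc2}(y;\tau)=r(y;\tau)v(y;\tau)$, so $\mathcal{I}_{3,nc}=\mathrm{E}[\nabla_{\tau^{\top}}Z_{nc}(\tau_0)]$ decomposes into a contribution $-\mathrm{E}_{g}[\nabla_{\tau^{\top}}v(x;\tau_{0})]$ from the first term and $\mathrm{E}_{a}[\nabla_{\tau^{\top}}(rv)]=\mathrm{E}_{g}[vv^{\top}]+\mathrm{E}_{g}[\nabla_{\tau^{\top}}v]$ from the second, using $\nabla_{\tau}r=rv$ and $q(x;\tau_0)=g(x)$. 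The curvature terms cancel, leaving $\mathcal{I}_{3,nc}=\mathrm{E}[v(x;\tau_0)^{\otimes 2}]$.

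For $\mathcal{I}_{1,nc}$, I would differentiate $\mathrm{E}[v(x;\tau)\mid x_{\mathrm{obs}};\tau]$ via the standard identity $\nabla_{\tau^{\top}}\mathrm{E}[h(x;\tau)\mid x_{\mathrm{obs}};\tau]=\mathrm{E}[h\,(\nabla_{\tau^{\top}}\log p(x_{\mathrm{mis}}\mid x_{\mathrm{obs}};\tau))\mid x_{\mathrm{obs}}]+\mathrm{E}[\nabla_{\tau^{\top}}h\mid x_{\mathrm{obs}}]$, noting that $\nabla_{\tau}\log p(x_{\mathrm{mis}}\mid x_{\mathrm{obs}};\tau)=v(x;\tau)-\mathrm{E}[v(x;\tau)\mid x_{\mathrm{obs}};\tau]$ because the $c$-component of $v$ is constant and therefore drops out. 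Combining this with the $Z_{nc2}$ contribution computed above, the curvature pieces and the $\mathrm{E}[vv^{\top}]$ pieces cancel out and one obtains $\mathcal{I}_{1,nc}=\mathrm{E}[\mathrm{E}[v(x;\tau_{0})\mid x_{\mathrm{obs}}]^{\otimes 2}]$ by the tower property.

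For the convergence $\{\mathcal{I}_{3,nc}^{-1}\mathcal{I}_{2,nc}\}^{j}\to 0$, I would invoke the missing information identity $\mathcal{I}_{3,nc}=\mathcal{I}_{1,nc}+\mathcal{I}_{2,nc}$ (the analogue established in the proof of Theorem \ref{thm:main}), whereby $\mathcal{I}_{2,nc}=\mathrm{E}[v v^{\top}]-\mathrm{E}[\mathrm{E}[v\mid x_{\mathrm{obs}}]^{\otimes 2}]=\mathrm{E}[\mathrm{Cov}(v\mid x_{\mathrm{obs}})]\succeq 0$, and $\mathcal{I}_{1,nc}\succeq 0$ is also a second-moment matrix. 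Because $\mathcal{I}_{3,nc}^{-1}\mathcal{I}_{2,nc}$ is not symmetric, I would pass to the similar matrix $A=\mathcal{I}_{3,nc}^{-1/2}\mathcal{I}_{2,nc}\mathcal{I}_{3,nc}^{-1/2}$, which is symmetric and satisfies $0\preceq A\preceq I$ since $\mathcal{I}_{2,nc}\preceq \mathcal{I}_{3,nc}$. The eigenvalues of $\mathcal{I}_{3,nc}^{-1}\mathcal{I}_{2,nc}$ therefore lie in $[0,1]$, and strict inequality at the upper end is equivalent to $\mathcal{I}_{1,nc}$ being positive definite, i.e.\ to identifiability of $\tau$ from the observed-data likelihood -- a standard regularity assumption already implicit elsewhere in the paper. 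Under this assumption the spectral radius of $\mathcal{I}_{3,nc}^{-1}\mathcal{I}_{2,nc}$ is strictly below $1$, and Gelfand's formula gives $\{\mathcal{I}_{3,nc}^{-1}\mathcal{I}_{2,nc}\}^{j}\to 0$.

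The main obstacle is this last asymmetry point: one must resist the temptation to quote spectral-radius bounds for PSD matrices directly on $\mathcal{I}_{3,nc}^{-1}\mathcal{I}_{2,nc}$, and instead reduce to the congruent symmetric operator $A$ so that the operator inequality $\mathcal{I}_{2,nc}\preceq \mathcal{I}_{3,nc}$ can be used. The rest is essentially bookkeeping; the algebra in the first two steps is slightly delicate because of the augmented parameter $\tau=(c,\theta^{\top})^{\top}$, but the $c$-coordinate of $v$ being constant makes every boundary term vanish cleanly.
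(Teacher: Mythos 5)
Your proposal is correct and follows essentially the same route as the paper: the closed forms for $\mI_{1,nc}$ and $\mI_{3,nc}$ come from the same differentiation-under-the-conditional-expectation algebra used in Corollary \ref{col:nce}, and your spectral argument via the symmetric congruence $\mI_{3,nc}^{-1/2}\mI_{2,nc}\mI_{3,nc}^{-1/2}$ is just a restatement of the paper's simultaneous diagonalization $\mI_{3,nc}=RR^{\top}$, $\mI_{1,nc}=R\Lambda R^{\top}$ with Jensen's inequality giving $\Lambda\preceq I$. Your one genuine addition is making explicit that the spectral radius falls strictly below $1$ only when $\mI_{1,nc}\succ 0$ (identifiability of $\tau$ from the observed data), a condition the paper's proof leaves implicit.
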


%Note that the original NCE is more efficient than Monte Carlo MLE when there is no missing data \citep{Uehara}. 

%Note when there is no missing data, NCE is more efficient than Monte Carlo MLE \citep{Uehara}. On the other hand, when there is missing data, this statement does not hold. However, the efficiency of the methods depends on the underlying generating mechanism. 

\subsection{FISCORE}
\label{sec:the_fiscore}

First, we analyze score matching with the EM algorithm $\hat{\theta}_{sc}$, which is the solution of $Z_{sc,\mathrm{obs}}(\bx_{\mathrm{obs}};\theta)=0$ where $Z_{sc,\mathrm{obs}}(\bx_{\mathrm{obs}};\theta)$ is defined by \eqref{eq:fiscore}. 
%Let $Z_{sc,\mathrm{obs}}(\bx_{\mathrm{obs}};\theta)=\mathrm{E}[Z_{sc}(\bx;\theta) \mid \bx_{\mathrm{obs}}; \theta]$. 
The asymptotic distribution of $\hat{\theta}_{sc}$ is obtained as follows.
\begin{theorem}
\label{thm:easy}
% The term $\hat{\theta}_{sc}-\theta_{0}$ is equal to 
% \begin{align*}
% -\mathrm{E}[\nabla_{\theta^{\top}}Z_{sc,\mathrm{obs}}(\theta_{0})]^{-1}Z_{sc,\mathrm{obs}}(\theta_{0})+\mathrm{o}_{p}(n^{-1/2}). 
% \end{align*}

%The term $\sqrt{n}(\hat{\theta}_{sc}-\theta_{0})$ asymptotically converges to the normal distribution with mean $0$ and variance $\mI_{1,sc}^{-1}\mJ_{1,sc}\mI_{1,sc}^{\top-1}$,
%The asymptotic distribution of $\hat{\theta}_{sc}-\theta_0$ is
We have 
\begin{align*}
    \sqrt{n} (\hat{\theta}_{sc}-\theta_0) \stackrel{d}{\rightarrow} \mathrm{N} (0, \mI_{1,sc}^{-1}\mJ_{1,sc}(\mI_{1,sc}^{\top})^{-1}),
\end{align*}
where
\begin{align*}
    \mI_{1,sc} &= \mathrm{E}[\nabla_{\theta^{\top}}Z_{sc,\mathrm{obs}}(\bx_{\mathrm{obs}};\theta_{0})],\\
    \mJ_{1,sc} &= \mathrm{var}[Z_{sc,\mathrm{obs}}(\bx_{\mathrm{obs}};\theta_{0})]. 
\end{align*}

\end{theorem}

Next, we study the asymptotic property of FISCORE by focusing on each update. 
%First, we consider the case of FISCORE.
Given an initial $\sqrt{n}$-consistent estimator $\hat{\theta}_{p}$ for $\theta$, consider the imputed equation:
\begin{align*}
    Z_{sc,m}(\theta \mid \hat{\theta}_{p}):=\frac{1}{n}\sum_{i=1}^{n}\sum_{k=1}^{m}w(x_{i}^{*k};\hat{\theta}_{p})z_{sc}(\theta;x_{i}^{*k})=0,
\end{align*}
where $x^{*k}_{i}=(x_{i,\mathrm{obs}},x^{*k}_{i,\mathrm{mis}})$, $x^{*k}_{i,\mathrm{mis}}\sim b(x_{\mathrm{mis}})$, and $w(x_{i}^{*k};\hat{\theta}_p)\propto{\tilde{p}(x_{i}^{*k};\hat{\theta}_p)/b(x_{i,\mathrm{mis}}^{*k}) }$.
% \begin{align*}
%     w(x_{i}^{*k};\theta)=\frac{\tilde{p}(x_{i}^{*k};\theta)/b(x_{i,\mathrm{mis}}^{*k}) }{\sum_{k=1}^{m}\tilde{p}(x_{i}^{*k};\theta)/b(x_{i,\mathrm{mis}}^{*k}) }. 
% \end{align*}

Here, we consider the case $m \to \infty$.
%and $n$ fixed. This result is easily extended to the case where $m \to \infty$ as $n \to \infty$. 
See Appendix \ref{sec:finite} for the case of finite $m$. 
As $m \to \infty$, the function
$Z_{sc,m}(\theta \mid \hat{\theta}^{p})$ converges to 
\begin{align*}
    \bar{Z}_{sc}(\theta \mid \hat{\theta}_{p})=\mathrm{E}[Z_{sc}(\bx;\theta) \mid \mathbf{x}_{\mathrm{obs}};\hat{\theta}_{p}].
\end{align*}
%where $Z_{sc}(\theta)=\nabla_{\theta} M_{sc}(\theta)$. 
%We define the solution to $\bar{Z}_{sc}(\theta \mid \hat{\theta}_{p})=0$ as $\hat{\theta}_{sc,\infty}$ and consider its asymptotic variance.
Let $\hat{\theta}_{sc,\infty}$ be the solution to $\bar{Z}_{sc}(\theta \mid \hat{\theta}_{p})=0$.
Its asymptotic property is obtained as follows.

\begin{theorem}
\label{thm:main}
%\label{col:score}
%The term $\hat{\theta}_{sc,\infty}-\theta_{0}$ is equal to 
We have 
\begin{align*}
\hat{\theta}_{sc,\infty}=\hat{\theta}_{sc} +\mathcal{I}^{-1}_{3,sc}\mathcal{I}_{2,sc}(\hat{\theta}_{p}-\hat{\theta}_{sc})+\mathrm{o}_{p}(n^{-1/2}),\,
\end{align*}
where 
\begin{align*}
    %\mI_{1,sc}&=\mI_{3,sc}-\mI_{2,sc},\\
    %\mathcal{I}_{2,sc}&=-\mathrm{cov}[Z_{sc}(\bx;\theta_{0}),\nabla_{\theta}\log \tilde{p}(\bx;\theta_{0}) \mid \bx_{\mathrm{obs}}],\\
\mathcal{I}_{2,sc}&=-\mathrm{E}[\mathrm{cov}[z_{sc}(x;\theta_{0}),\nabla_{\theta}\log \tilde{p}(x;\theta_{0}) \mid x_{\mathrm{obs}} ]], \\
%    &=-\mathrm{E}\left[\mathrm{cov}[z_{sc}(x;\theta_0),\log \tilde{p}(x;\theta_0) \mid x_{\mathrm{obs}}]\right]|_{\theta_{0}},\\
    \mathcal{I}_{3,sc}&=\mathrm{E}\left[\sum_{s=1}^{d}\nabla_{\theta}c_s(x;\theta_0)^{\otimes 2}\right]. % \right|_{\theta_{0}},
%    z_{sc}(x;\theta)&=\sum_{s=1}^{d}\left\{c_{s}(x;\theta)\nabla_{\theta}(c_{s}(x;\theta))+\nabla_{x^{s}}(\nabla_{\theta}c_{s}(x;\theta))\right\}. 
\end{align*}
\end{theorem}

In the proof of Theorem \ref{thm:main}, we use the relation $\mI_{3,sc}=\mI_{1,sc}+\mI_{2,sc}$, which corresponds to the missing information principle or Louis' formula for normalized models \citep{kimshao13,orchard72,louis82}. 
Specifically, if $Z_{sc}(\bx;\theta)$ is replaced by the true score function $S_{sc}(\mathbf{x};\theta)=\nabla_{\theta}\log p(\mathbf{x};\theta)$, then Theorem~\ref{thm:main} reduces to the result of \cite{wang98}. 
In this case, $\mathcal{I}_{3,sc}$,\,$\mathcal{I}_{1,sc}$ and $\mathcal{I}_{2,sc}$ are replaced by
\begin{align*}
    \mathcal{I}_{\mathrm{com}}&=\mathrm{E}[\nabla_{\theta^{\top}}S_{sc}(\bx;\theta_{0})],
\,    \mathcal{I}_{\mathrm{obs}} =\mathrm{E}[\nabla_{\theta^{\top}}S_{\mathrm{obs}}(\bx_{\mathrm{obs}};\theta_{0})],\\
    \mathcal{I}_{\mathrm{mis}}&=\mathrm{E}[S_{\mathrm{mis}}(\bx;\theta_{0})^{\otimes 2}],
\end{align*}
respectively, where
\begin{align*}
    S_{\mathrm{mis}}(\bx;\theta)&=S_{sc}(\bx;\theta)-\mathrm{E}[S_{sc}(\bx;\theta) \mid \mathbf{x}_{\mathrm{obs}};\theta], \\
    S_{\mathrm{obs}}(\bx_{\mathrm{obs}};\theta)&=\int S_{sc}(\bx;\theta)\mu(\mathrm{d}\bx_{\mathrm{mis}}).
\end{align*}
The relation $\mI_{\mathrm{com}}=\mI_{\mathrm{obs}}+\mI_{\mathrm{mis}}$ holds, and the term $\mathcal{I}_{\mathrm{com}}^{-1}\mathcal{I}_{\mathrm{mis}}$ is often called the fraction of missing information \citep{kimshao13}. For the current problem, $\mathcal{I}_{3,sc}^{-1}\mathcal{I}_{2,sc}$ can be considered as an analog. As seen in Corollary \ref{cor:clear}, this qunantity is important to guarantee the convergence. It is generally difficult to prove that the spectral radius of $\mI_{3,sc}^{-1}\mI_{2,sc}$ is less than $1$ in FISCORE. However, the experimental results presented in Section \ref{sec:experiment} imply that this algorithm converges in practice. 

\begin{remark}
Similar results hold for the non--negative score matching defined by $M_{sc+}(x;\theta)$. 
See Appendix \ref{sec:variance}.  
\end{remark}

\begin{remark}
See Appendix \ref{sec:variance} for variance estimators based on Theorems~\ref{thm:easy2},\,\ref{thm:easy} and \ref{thm:main}. 
\end{remark}

\section{EXTENSION TO MNAR CASE}\label{sec:extensions}

We discuss an extension to the case of missing not at random (MNAR). 
%We discuss the extensions to the case of missing not at random (MNAR) and contrastive divergence (CD).  
%For more details, see Appendix. 
In general, the nonparametric identification condition does not hold in the MNAR case \citep{RobinsJM1997Taco}. However, assuming the existence of nonresponse instrument and parametric models, the parameter can be identified in some cases \citep{KimJiYoung2012Pfif,wang14}. We hereafter assume the existence of a nonresponse instrument so that the parameter can be identified. 
\begin{assumption}[\citet{wang14} ]
There exists nonresponse instrument $\bx_2$ s.t. $\bx=(\bx_1,\bx_2)$ and $\bx_2 \bigCI \delta \mid \bx_1$.
\end{assumption}

To estimate the parameter under MNAR data, FISCORE and FINCE can be still applied. First, we specify a propensity score model $\pi(\delta|x;\phi)$ for $\mathrm{Pr}(\delta|x)$. For the case of FISCORE, we want to solve the equation with respect to $\eta$: 
\begin{align}
\label{eq:ideal2}
    \mathrm{E}\left[\begin{pmatrix}
    Z_{sc}(\bx;\theta) \\
    \nabla_{\phi} \log \pi(\bm{\delta}|\bx;\phi)
    \end{pmatrix}
    |\bx_{\mathrm{obs}},\bm{\delta};\eta \right]=0,
\end{align}
where the expectation is taken under $t(\mathbf{x}_{\mathrm{mis}}|\mathbf{x}_{\mathrm{obs}},\bm{\delta};\eta)\propto p(\mathbf{x};\theta)\pi(\bm{\delta}|\mathbf{x};\phi)$, and $\eta=(\theta,\phi)$. 
Importantly, we must address the selection mechanism unlike in the MAR and MCAR cases, because $p(x_{\mathrm{mis}}|x_{\mathrm{obs}})=p(x_{\mathrm{mis}}|\delta,x_{\mathrm{obs}})$ does not hold. The difference is evident when we compare \eqref{eq:ideal2} with \eqref{eq:fiscore}. Owing to MNAR, the first modification is such that the selection mechanism $\mathrm{\pi}(\delta|x)$ appears when calculating the fractional weight: $w_{ik} \propto \tilde{p}(x_{i}^{*k};\hat{\theta}_{t})\pi(\delta_i|x_{i}^{*k};\hat{\phi}_{t})/b(x_{\mathrm{mis}}^{*k})$. The second modification is the score function of the propensity score model which is illustrated in \eqref{eq:ideal2}. 

In the case of FINCE, let $\zeta =(\tau^{\top},\phi^{\top})^{\top}$ and $Z_{nc}(\bm{\delta},\bx,\by;\zeta)$ be defined as an augmented estimating equation:
\begin{align*}
\begin{pmatrix}
Z_{nc1}(\bx;\tau)+Z_{nc2}(\by;\tau)\\
\nabla_{\phi} \log \pi(\bm{\delta}|\bx;\phi)
\end{pmatrix}.
\end{align*}
The algorithm is modified to solve the following equation with respect to $\zeta$:
\begin{align*}
   \mathrm{E}\left[\begin{pmatrix}
Z_{nc1}(\bx;\tau)+Z_{nc2}(\by;\tau) \\
\nabla_{\phi} \log \pi(\bm{\delta}|\bx;\phi)
\end{pmatrix}|\bx_{\mathrm{obs}},\bm{\delta};\zeta \right]=0. 
\end{align*}
Refer to Appendix \ref{sec:mnar} for the details. 

\section{SIMULATION RESULTS}
\label{sec:experiment}
Here, we confirm the validity of FINCE and FISCORE by simulation. %under the following two settings: (1) truncated normal distribution with missing data including MNAR case.
We do not compare them with variational NCE because the latter does not take the MNAR case into account and does not provide a confidence interval. 

\subsection{Truncated Normal Distribution}

First, we consider covariance estimation for the two-dimensional truncated normal distribution:
\begin{align*}
\tilde{p}(x_1,x_2; \Sigma) = \exp \left( -\frac{1}{2} x^{\top} \Sigma^{-1} x \right),\Sigma=\begin{pmatrix} 2.0 & 0.3 \\ 0.3 & 2.0 \end{pmatrix},
\end{align*}
where $x_1,x_2 \geq 0$.
%$\phi(x;\Sigma)=\exp(-0.5x^{\top}\Sigma^{-1} x)\mathrm{I}(x>0)$ where $\Sigma$ is a $2 \times 2$ matrix parameter and $x=(x_1,x_2)$ is a two-dimensional vector. 
%$\Sigma_{11}=2.0,\,\Sigma_{12}=1.3,\,\Sigma_{21}=1.3,\,\Sigma_{22}=2.0$.
For $n$ samples of $x=(x_1,x_2)$, $x_1$ was always observed whereas missing data of $x_2$ was introduced with the following two mechanisms. 
The binary random variable $\delta$ is the missing indicator: $x_{2}$ was observed if and only if $\delta=1$.

\begin{itemize}
    \item MAR: $\mathrm{Pr}(\delta=1 \mid x)=1/[1+\exp\{-(x_1-0.9)/0.3\}]$ %and $\Sigma_{11}=2.0,\,\Sigma_{12}=1.3,\,\Sigma_{21}=1.3,\,\Sigma_{22}=2.0$. 
    \item MNAR: $\mathrm{Pr}(\delta=1 \mid x)=1/[1+\exp\{-5(x_2-0.9) \}]$ %$\mathrm{Pr}(\delta=1 | x)=1/[1+\exp\{-(x_2-0.9)/0.2 \}]$ %and $\Sigma$ is the same with MAR setting.
\end{itemize}

For data generation, we used the R-package \textit{mvtnorm} \citep{mvtnorm}.
In both cases, the overall missing rates were about 30\%.

We compared three estimators of $\Sigma$: original NCE based on complete data,  (original) FINCE, and FISCORE. 
Note that the noise and auxilliary distributions were set to truncated normal distributions with moment matching and $m=100$ imputations were used in FINCE and FISCORE. 
%\textbf{COMP}: original NCE based on complete data when using truncated distribution as an auxiliary distribution and noise distribution,\,\textbf{FINCE}: (original) FINCE with $m=100$, \textbf{FISCORE}: FISCORE with $m=100$. Note that the mean and variance of noise distributions are selected by moment matching. 

Table \ref{tab:exp1} presents the absolute bias and median squared error from $200$ simulations. 
It shows that NCE on complete data has significant bias in both the MAR and MNAR cases, which is expected from the theory of missing data \citep{LittleRoderickJ.A2002Sawm}. 
%This outcome is expected because using only complete cases leads to the bias in the case of MAR \citep{LittleRoderickJ.A2002Sawm}. 
On the other hand, {FINCE} and {FISCORE} achieve better estimation accuracy with reduced bias by appropriately accounting for the missing data. %, which imply their consistency. 
%Though the performance of \textbf{FISCORE} is better than that of \textbf{FINCE} in this experiment, by increasing noise samples, the estimation accuracy of \textbf{FINCE} will be improved.  

\begin{table}[!]
    \centering
    \caption{The absolute bias and median square error}
      \label{tab:exp1}
    MAR \\
    \begin{tabular}{llccc} \toprule 
    $n$     & & NCE & FINCE & FISCORE   \\ \midrule
    500 & (bias) & 0.29    & 0.03   & 0.03   \\
       & (mse) & 0.040     & 0.024  & 0.021       \\
    1000 & (bias) & 0.25   & 0.02 &  0.02    \\
       & (mse)  & 0.032      & 0.015  &  0.011     \\  \bottomrule
    \end{tabular}
    \\ MNAR \\
    \begin{tabular}{llccc}\toprule
      $n$   & & NCE & FINCE & FISCORE \\ \midrule
    500 & (bias) &  0.33  & 0.18   &  0.12  \\ 
       & (mse) & 0.041     & 0.027 & 0.021       \\
    1000 & (bias) & 0.24   & 0.14 &  0.14     \\
       & (mse)  &  0.039  & 0.020 &    0.012  \\ \bottomrule
    \end{tabular}
    \vspace{-0.2cm}
\end{table}

In addition, we constructed $95$\% confidence intervals based on the variance estimators in Appendix \ref{sec:variance}. 
Table \ref{tab:exp4} shows their coverage probabilities: they are approximately equal to 95\% in both FINCE and FISCORE.

\begin{table}[h!]
    \centering
    \caption{Coverage probabilities: MAR setting }
    \begin{tabular}{ccc} \toprule
    $n$     & FINCE & FISCORE \\ \midrule
    500 &  94\%    &  89\%  \\
    1000 & 94\%     &  92\%  \\  \bottomrule
    \end{tabular}
        \label{tab:exp4}
        \vspace{-0.2cm}
         %\vspace{-0.5cm}
\end{table}

\subsection{Truncated Gaussian Graphical Model}
Next, we consider estimation of the truncated Gaussian graphical model (GGM) considered in \cite{LinLina2016EoHG} with missing data.

Let $G=(V,E)$ be an undirected graph where $V=\{ 1,\cdots,d \}$. 
Then, a truncated GGM with graph $G$ is defined as the unnormalized model \eqref{nnm} with 
\begin{align}
\tilde{p}(x;\Sigma) = \exp \left( -\frac{1}{2} x^{\top} \Sigma^{-1} x \right) \quad (x \in \mathbb{R}_+^d), \label{tGGM}
\end{align}
where $\Sigma$ is a ${d \times d}$ positive definite matrix satisfying $(\Sigma^{-1})_{ij}=0$ for $(i,j) \not\in E$.
Similar to the original GGM \citep{LauritzenSteffenL1996Gm}, $X_i$ and $X_j$ are conditionally independent on the other variables $X_k \ (k \neq i,j)$ if $(i,j) \not\in E$.
Here, we estimate $G$ by using the confidence intervals of the entries of $\Sigma^{-1}$.

We generated $n=1000$ independent samples from a truncated GGM  with $d=10$ and $G$ provided in the top panel of Figure~\ref{fig_GGM}.
Namely, there are three clusters $(x_1,x_2,x_3),(x_4,x_5,x_6)$, and $(x_7,x_8,x_9)$ of three variables and one isolated variable $x_{10}$.
We set all diagonal entries of $\Sigma^{-1}$ to 1 and all nonzero off-diagonal entries of $\Sigma^{-1}$ to 0.5.
We introduced missing values on $x_3$, $x_6$ and $x_9$ by using the following MAR mechanism: for $k=1,2,3$, random vector $c_k \in \mathbb{R}^{10}$ was generated by $(c_k)_3=(c_k)_6=(c_k)_9=0$ and $(c_k)_j \sim {\rm N} (0,1) \ (j \neq 3,6,9)$ and then $x_{3k}$ was missed with probability $1/(3+\exp(c_k^{\top} x))$.
The proportion of missing data was approximately 60\%. 

Then, we fitted the truncated GGM by using FINCE and FISCORE with 100 imputations.
We used ${\rm N}(0,2)$ truncated to the positive orthant as the proposal distribution for missing entries.
In FINCE, we generated $n=1000$ noise samples from the product of the coordinate-wise exponential distributions with the same mean as the data.

We determined graph $G$ by collecting all edges $(i,j)$ for which the 95 \% confidence interval of $(\Sigma^{-1})_{ij}$ did not include zero.
Figure~\ref{fig_GGM} presents the result of one realization.

Table~\ref{tab:ggm} shows the proportions of falsely selected edges (false positive) and falsely unselected edges (false negative) in 100 realizations.
The coverage probabilities of the confidence intervals are approximately equal to 95\% in both FINCE and FISCORE.

\begin{figure}[h!]
\begin{minipage}{0.5\textwidth}
\begin{center}
truth\\
\vspace{0.1in}
\scalebox{0.7}{
\begin{tikzpicture}[every node/.style={circle,draw}]
    \node (A) at (0,0) {};
    \node (B) at (1,0) {};
    \node (C) at (0.5,0.5) {};
    \node (D) at (1,1) {};
    \node (E) at (2,1) {};
    \node (F) at (1.5,1.5) {};
    \node (G) at (2,0) {};
    \node (H) at (3,0) {};
    \node (I) at (2.5,0.5) {};
    \node (J) at (1.5,0.5) {};
    \foreach \u \v in {A/B,B/C,C/A,D/E,E/F,F/D,G/H,H/I,I/G}
        \draw (\u) -- (\v);
\end{tikzpicture}
}
\end{center}
\end{minipage}
\vspace{0.1in}
\\
\begin{minipage}{0.5\textwidth}
\begin{minipage}{0.45\textwidth}
\begin{center}
FINCE\\
\vspace{0.1in}
\scalebox{0.7}{
\begin{tikzpicture}[every node/.style={circle,draw}]
    \node (A) at (0,0) {};
    \node (B) at (1,0) {};
    \node (C) at (0.5,0.5) {};
    \node (D) at (1,1) {};
    \node (E) at (2,1) {};
    \node (F) at (1.5,1.5) {};
    \node (G) at (2,0) {};
    \node (H) at (3,0) {};
    \node (I) at (2.5,0.5) {};
    \node (J) at (1.5,0.5) {};
    \foreach \u \v in {B/C,D/E,D/F,E/G,G/I,H/I}
        \draw (\u) -- (\v);
\end{tikzpicture}
}
\end{center}
\end{minipage}
%\vspace{0.4in}
\begin{minipage}{0.45\textwidth}
\begin{center}
FISCORE\\
\vspace{0.1in}
\scalebox{0.7}{
\begin{tikzpicture}[every node/.style={circle,draw}]
    \node (A) at (0,0) {};
    \node (B) at (1,0) {};
    \node (C) at (0.5,0.5) {};
    \node (D) at (1,1) {};
    \node (E) at (2,1) {};
    \node (F) at (1.5,1.5) {};
    \node (G) at (2,0) {};
    \node (H) at (3,0) {};
    \node (I) at (2.5,0.5) {};
    \node (J) at (1.5,0.5) {};
    \foreach \u \v in {A/B,A/C,B/C,B/F,D/E,D/F,E/F,G/I,H/I}
        \draw (\u) -- (\v);
\end{tikzpicture}
}
\end{center}
\end{minipage}
\end{minipage}
\caption{True and selected graphs for the truncated GGM} %\eqref{tGGM}}
\label{fig_GGM}
\vspace{-0.3cm}
\end{figure}
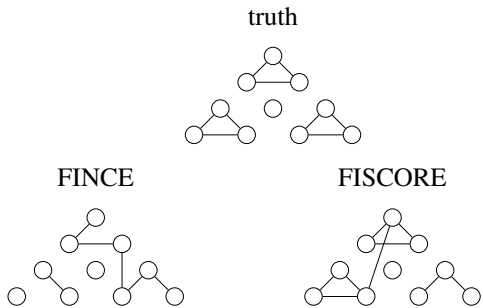

\begin{table}[h!]
    \centering
    \caption{Proportions of false positives (FP) and false negatives (FN) in edge selection of truncated GGM} %\eqref{tGGM}}
    \begin{tabular}{llccc}\toprule 
         & FINCE & FISCORE \\ \midrule 
    FP &  10.5\%    &  6.4\%  \\
    FN &  12.6\%     &  23.3\%   \\  \bottomrule
    \end{tabular}
        \label{tab:ggm}
\vspace{-0.3cm}
\end{table}

\section{APPLICATION TO REAL DATA}
%\subsection{Bivariate circular distribution}
Many models in directional statistics have intractable normalization constants \citep{mardia99}.
Here, we consider estimation of the bivariate circular distribution proposed by \cite{singh02}, which is a probability distribution on two circular variables $x_1,x_2 \in [0,2\pi)$. % where $S_1$ is the unit circle in $\mathbb{R}^2$.
It is an unnormalized model \eqref{nnm} with
\begin{align}
\tilde{p}(x_1,x_2 ; \theta) = & \exp (\kappa_1 \cos(x_1-\mu_1) + \kappa_2 \cos(x_2-\mu_2) \nonumber \\
&+ \lambda_{12} \sin(x_1-\mu_1) \sin(x_2-\mu_2)), \label{bvM}
\end{align}
where $\theta=(\kappa_1,\kappa_2,\mu_1,\mu_2,\lambda_{12})$. % and the normalization constant $Z(\theta)$ is intractable.
For identifiability, we imposed the parameter constraints $\kappa_1 \geq 0$, $\kappa_2 \geq 0$, $0 \leq \mu_1 <2\pi$ and $0 \leq \mu_2 <2\pi$.
%The normalization constant $Z(\theta)$ is an infinite sum of Bessel functions and thus intractable.
Although \cite{mardia08} developed a method for estimating $\theta$ based on pseudo-likelihood, it is not applicable to missing data.
We applied FINCE to estimate $\theta$ from missing data.

We used the data on wind direction in Tokyo on 00:00 ($x_1$) and 12:00 ($x_2$) for each day in 2018\footnote{available on Japan Meteorological Agency website}.
Thus, the sample size is $n=365$.
The data were discretized into 16 directions, such as north-northeast.
Figure~\ref{fig:wind} presents a $16 \times 16$ histogram of raw data in gray scale.

\begin{figure}
    \centering
    \includegraphics[width=50mm]{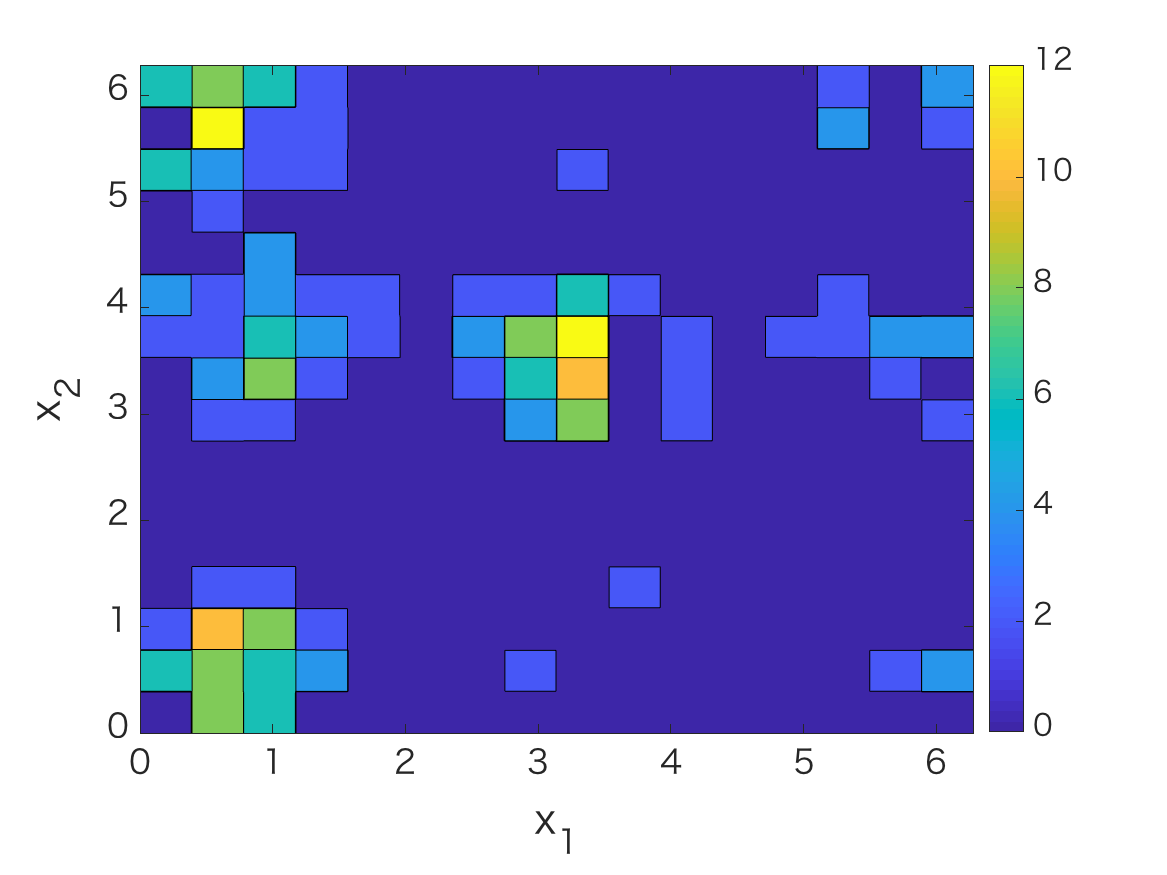} 
    \caption{Wind direction data}
    \label{fig:wind}
\vspace{-0.3cm}
\end{figure}

For each of the 365 samples, we introduced missing for $x_2$ with probability $1/(1+\exp(\cos(x_1)))$. Thus, the missing mechanism was MAR. Among 365 samples, $x_2$ was observed in 212 samples.

Then, we fit the model \eqref{bvM} by FINCE with 100 imputations and 1000 noise samples, where the noise distribution and proposal distribution for missing entries were set to uniform distributions on $[0,2\pi) \times [0,2\pi)$ and $[0,2\pi)$, respectively.
For comparison, we also fit the model \eqref{bvM} by NCE, in which 153 samples with $x_2$ missing were simply discarded, which is called the complete case analysis and known to have bias in MAR cases \citep{kimshao13}. 
% (complete case analysis).
%Such procedure is called the complete case analysis and known to have bias in MAR cases \citep{kimshao13}.

Table~\ref{tab:bvM} presents the confidence intervals obtained by FINCE, NCE on complete cases, and NCE on the full data (365 samples) for reference. %which can be viewed as the ground truth.
Whereas complete case analysis has a large bias, FINCE provides similar estimates to NCE on full data, with wider confidence intervals due to missing.
In particular, FINCE succeeds in detecting the 5\% significance of $\lambda_{12} \neq 0$, which implies that $x_1$ and $x_2$ are not independent.
Thus, FINCE enables statistical inference based on unnormalized models by properly  handling missing data.

\begin{table}[h!]
    \centering
    \caption{Confidence intervals for bivariate circular distribution \eqref{bvM}. (cc: complete case)}
    \begin{tabular}{cccc}\toprule 
         & FINCE & NCE (cc) & NCE (full) \\ \midrule
    $\kappa_1$ & [0.30,1.17] & [0.58,1.99] & [0.26,1.16] \\
    $\mu_1$ & [0.59,1.62] & [-0.18,1.31] & [0.69,1.58] \\
    $\kappa_2$ & [0.11,1.09] & [-0.07,0.96] & [0.16,0.84] \\
    $\mu_2$ & [4.08,5.01] & [3.46,5.70] & [4.09,4.85] \\
    $\lambda_{12}$ & [-2.20,-0.29] & [-1.96,0.94] & [-1.67,-0.48] \\ \bottomrule 
    \end{tabular}
        \label{tab:bvM}
    \vspace{-0.5cm}
\end{table}

\section{CONCLUSION}

We have proposed estimation methods for unnormalized models with missing data: FINCE and FISCORE. 
The proposed methods are computationally efficient, valid under general missing mechanisms, and enable statistical inference using the confidence intervals. Extending the recently developed statistically efficient estimators for unnormalized models \citep{UeharaMasatoshi2019Ueff} to missing data is an interesting future research. 

\newpage
\subsubsection*{Acknowledgements}
We would like to thank the anonymous reviewers for
their insightful comments and suggestions. 

Masatoshi Uehara was supported in part by MASASON Foundation. Takeru Matsuda was partially supported by JSPS KAKENHI Grant Numbers 16H06533 and 19K20220.

\bibliographystyle{chicago}
\bibliography{pfi}

\begin{thebibliography}{}

\bibitem[\protect\citeauthoryear{Besag}{Besag}{1975}]{BesagJulian1975SAoN}
Besag, J. (1975).
\newblock Statistical analysis of non-lattice data.
\newblock {\em Journal of the Royal Statistical Society. Series D (The
  Statistician)\/}~{\em 24}, 179--195.

\bibitem[\protect\citeauthoryear{Blei, Kucukelbir, and Mcauliffe}{Blei
  et~al.}{2017}]{BleiDavidM.2017VIAR}
Blei, D.~M., A.~Kucukelbir, and J.~D. Mcauliffe (2017).
\newblock Variational inference: A review for statisticians.
\newblock {\em Journal of the American Statistical Association\/}~{\em 112},
  859--877.

\bibitem[\protect\citeauthoryear{Dempster, Laird, and Rubin}{Dempster
  et~al.}{1977}]{dempster77}
Dempster, A.~P., N.~M. Laird, and D.~B. Rubin (1977).
\newblock Maximum likelihood from incomplete data via the {EM} algorithm.
\newblock {\em Journal of the Royal Statistical Society, Series B\/}~{\em 39},
  1--37.

\bibitem[\protect\citeauthoryear{Genz, Bretz, Miwa, Mi, Leisch, Scheipl, and
  Hothorn}{Genz et~al.}{2018}]{mvtnorm}
Genz, A., F.~Bretz, T.~Miwa, X.~Mi, F.~Leisch, F.~Scheipl, and T.~Hothorn
  (2018).
\newblock {\em {mvtnorm}: Multivariate Normal and t Distributions}.
\newblock R package version 1.0-8.

\bibitem[\protect\citeauthoryear{Geyer}{Geyer}{1994}]{GeyerC1994Otco}
Geyer, C. (1994).
\newblock On the convergence of monte carlo maximum likelihood calculations.
\newblock {\em Journal of the Royal Statistical Society, Series B\/}~{\em 56},
  261--274.

\bibitem[\protect\citeauthoryear{Gutmann and Hirayama}{Gutmann and
  Hirayama}{2011}]{hirayama}
Gutmann, M. and J.~Hirayama (2011).
\newblock Bregman divergence as general framework to estimate unnormalized
  statistical models.
\newblock {\em In Proccedings of the Conference on Uncertainty in Artificial
  Intelligence (UAI 2011)\/}.

\bibitem[\protect\citeauthoryear{Gutmann and Hyv\"arinen}{Gutmann and
  Hyv\"arinen}{2012}]{noise}
Gutmann, M. and A.~Hyv\"arinen (2012).
\newblock {Noise-Contrastive Estimation of Unnormalized Statistical Models,
  with Applications to Natural Image Statistics }.
\newblock {\em Journal of Machine Learning Research\/}~{\em 13}, 307--361.

\bibitem[\protect\citeauthoryear{Hinton}{Hinton}{2002}]{HintonGeoffreyE.2002TPoE}
Hinton, G.~E. (2002).
\newblock Training products of experts by minimizing contrastive divergence.
\newblock {\em Neural Computation\/}~{\em 14}, 1771--1800.

\bibitem[\protect\citeauthoryear{Hyv\"arinen}{Hyv\"arinen}{2005}]{score}
Hyv\"arinen, A. (2005).
\newblock Estimation of non-normalized statistical models by score matching.
\newblock {\em Journal of Machine Learning Research\/}~{\em 6}, 695--709.

\bibitem[\protect\citeauthoryear{Hyv{\"a}rinen}{Hyv{\"a}rinen}{2007}]{HyvarinenAapo2007Seos}
Hyv{\"a}rinen, A. (2007).
\newblock Some extensions of score matching.
\newblock {\em Computational Statistics and Data Analysis\/}~{\em 51},
  2499--2512.

\bibitem[\protect\citeauthoryear{Hyv{\"a}rinen, Karhunen, and
  Oja}{Hyv{\"a}rinen et~al.}{2001}]{HyvarinenAapo2001Ica}
Hyv{\"a}rinen, A., J.~Karhunen, and E.~Oja (2001).
\newblock {\em Independent component analysis}.
\newblock New York: J. Wiley.

\bibitem[\protect\citeauthoryear{Kim}{Kim}{2011}]{Kim11}
Kim, J.~K. (2011).
\newblock Parametric fractional imputation for missing data analysis.
\newblock {\em Biometrika\/}~{\em 98}, 119--132.

\bibitem[\protect\citeauthoryear{Kim and Shao}{Kim and Shao}{2013}]{kimshao13}
Kim, J.~K. and J.~Shao (2013).
\newblock {\em Statistical Methods for Handling Incomplete Data}.
\newblock Chapman \& Hall / CRC.

\bibitem[\protect\citeauthoryear{Kim and Kim}{Kim and
  Kim}{2012}]{KimJiYoung2012Pfif}
Kim, J.~Y. and J.~K. Kim (2012).
\newblock Parametric fractional imputation for nonignorable missing data.
\newblock {\em Journal of the Korean Statistical Society\/}~{\em 41}, 291--303.

\bibitem[\protect\citeauthoryear{Lauritzen}{Lauritzen}{1996}]{LauritzenSteffenL1996Gm}
Lauritzen, S.~L. (1996).
\newblock {\em Graphical models}.
\newblock Oxford statistical science series ; 17. Oxford: Clarendon Press.

\bibitem[\protect\citeauthoryear{Levine and Casella}{Levine and
  Casella}{2001}]{LevineRichardA2001IotM}
Levine, R.~A. and G.~Casella (2001).
\newblock Implementations of the monte carlo em algorithm.
\newblock {\em Journal of Computational and Graphical Statistics\/}~{\em 10},
  422--439.

\bibitem[\protect\citeauthoryear{Lin, Drton, and Shojaie}{Lin
  et~al.}{2016}]{LinLina2016EoHG}
Lin, L., M.~Drton, and A.~Shojaie (2016).
\newblock Estimation of high-dimensional graphical models using regularized
  score matching.
\newblock {\em Electronic journal of statistics\/}~{\em 10}, 806--854.

\bibitem[\protect\citeauthoryear{Little and Rubin}{Little and
  Rubin}{2002}]{LittleRoderickJ.A2002Sawm}
Little, R. J.~A. and D.~B. Rubin (2002).
\newblock {\em Statistical analysis with missing data\/} (2nd ed. ed.).
\newblock Hoboken, N.J.: Wiley.

\bibitem[\protect\citeauthoryear{Louis}{Louis}{1982}]{louis82}
Louis, T.~A. (1982).
\newblock Finding the observed information matrix when using the {EM}
  algorithm.
\newblock {\em Journal of the Royal Statistical Society, Series B\/}~{\em 44},
  226--233.

\bibitem[\protect\citeauthoryear{Mardia, Hughes, Taylor, and Singh}{Mardia
  et~al.}{2008}]{mardia08}
Mardia, K.~V., G.~Hughes, C.~C. Taylor, and H.~Singh (2008).
\newblock A multivariate von mises distribution with applications to
  bioinformatics.
\newblock {\em Can. J. Statist.\/}~{\em 89}, 99--109.

\bibitem[\protect\citeauthoryear{Mardia and Jupp}{Mardia and
  Jupp}{1999}]{mardia99}
Mardia, K.~V. and P.~E. Jupp (1999).
\newblock {\em Directional Statistics}.
\newblock New York, NY: Wiley.

\bibitem[\protect\citeauthoryear{Meng}{Meng}{1994}]{meng94}
Meng, X.~L. (1994).
\newblock Multiple-imputation inferences with uncongenial sources of input
  (with discussion).
\newblock {\em Statistical Science\/}~{\em 9}, 538--573.

\bibitem[\protect\citeauthoryear{Mller, Pettitt, Reeves, and Berthelsen}{Mller
  et~al.}{2006}]{MllerJ.2006AeMc}
Mller, J., A.~N. Pettitt, R.~Reeves, and K.~K. Berthelsen (2006).
\newblock An efficient markov chain monte carlo method for distributions with
  intractable normalising constants.
\newblock {\em Biometrika\/}~{\em 93}, 451--458.

\bibitem[\protect\citeauthoryear{Murray, Ghahramani, and MacKay}{Murray
  et~al.}{2006}]{MurrayIain2012Mfdd}
Murray, I., Z.~Ghahramani, and D.~MacKay (2006).
\newblock Mcmc for doubly-intractable distributions.
\newblock {\em In Proccedings of the Conference on Uncertainty in Artificial
  Intellegence (UAI 2006)\/}.

\bibitem[\protect\citeauthoryear{Murray}{Murray}{2018}]{MurrayJaredS.2018MIAR}
Murray, J.~S. (2018).
\newblock Multiple imputation: A review of practical and theoretical findings.
\newblock {\em Statistical Science\/}~{\em 33}, 142--159.

\bibitem[\protect\citeauthoryear{Orchard and Woodbury}{Orchard and
  Woodbury}{1972}]{orchard72}
Orchard, T. and M.~Woodbury (1972).
\newblock A missing information principle: theory and applications.
\newblock In {\em Proceedings of the 6th Berkeley Symposium on Mathematical
  Statistics and Probability}, Volume~1, Berkeley, California, pp.\  695--715.
  University of California Press.

\bibitem[\protect\citeauthoryear{Owen}{Owen}{2013}]{mcbook}
Owen, A.~B. (2013).
\newblock {\em Monte Carlo theory, methods and examples}.

\bibitem[\protect\citeauthoryear{Pihlaja, Gutmann, and Hyv\"arinen}{Pihlaja
  et~al.}{2010}]{noise2}
Pihlaja, M., M.~Gutmann, and A.~Hyv\"arinen (2010).
\newblock A family of computationally efficient and simple estimators for
  unnormalized statistical models.
\newblock {\em In Procceedings of the Conference on Uncertainty in Artificial
  Intelligence (UAI 2010)\/}.

\bibitem[\protect\citeauthoryear{Rao}{Rao}{2008}]{RaoC.Radhakrishna2008LSIa}
Rao, C.~R. (2008).
\newblock {\em Linear Statistical Inference and its Applications: Second
  Editon}.
\newblock Hoboken, NJ, USA: John Wiley \& Sons, Inc.

\bibitem[\protect\citeauthoryear{Rhodes and Gutmann}{Rhodes and
  Gutmann}{2019}]{Rhodes}
Rhodes, B. and M.~Gutmann (2019).
\newblock Variational noise-contrastive estimation.
\newblock {\em In Proceedings of the International Workshop on Artificial
  Intelligence and Statistics 2019 (AISTATS 2019)\/}.

\bibitem[\protect\citeauthoryear{Robins and Ritov}{Robins and
  Ritov}{1997}]{RobinsJM1997Taco}
Robins, J.~M. and Y.~Ritov (1997).
\newblock Toward a curse of dimensionality appropriate (coda) asymptotic theory
  for semi-parametric models.
\newblock {\em Statistics in medicine\/}~{\em 16}, 285--319.

\bibitem[\protect\citeauthoryear{Robins and Wang}{Robins and
  Wang}{2000}]{robins00}
Robins, J.~M. and N.~Wang (2000).
\newblock Inference for imputation estimators.
\newblock {\em Biometrika\/}~{\em 87}, 113--124.

\bibitem[\protect\citeauthoryear{Rubin}{Rubin}{1987}]{rubin87}
Rubin, D.~B. (1987).
\newblock {\em Multiple Imputation for Nonresponse in Surveys}.
\newblock New York: Wiley.

\bibitem[\protect\citeauthoryear{Seaman, Galati, Jackson, and Carlin}{Seaman
  et~al.}{2013}]{SeamanShaun2013WIMb}
Seaman, S., J.~Galati, D.~Jackson, and J.~Carlin (2013).
\newblock What is meant by "missing at random"?
\newblock {\em Statistical Science\/}~{\em 28}, 257--268.

\bibitem[\protect\citeauthoryear{Singh, Hnizdo, and Demchuk}{Singh
  et~al.}{2002}]{singh02}
Singh, H., V.~Hnizdo, and E.~Demchuk (2002).
\newblock Probabilistic model for two dependent circular variables.
\newblock {\em Biometrika\/}~{\em 89}, 719--723.

\bibitem[\protect\citeauthoryear{Tanner and Wong}{Tanner and
  Wong}{1987}]{tanner87}
Tanner, M.~A. and W.~H. Wong (1987).
\newblock The calculation of posterior distribution by data augmentation.
\newblock {\em Journal of the American Statistical Association\/}~{\em 82},
  528--540.

\bibitem[\protect\citeauthoryear{Tieleman}{Tieleman}{2008}]{TielemanTijmen2008TrBm}
Tieleman, T. (2008).
\newblock Training restricted boltzmann machines using approximations to the
  likelihood gradient.
\newblock In {\em Proceedings of the international conference on machine
  learning (ICML 2008)}.

\bibitem[\protect\citeauthoryear{Tsiatis}{Tsiatis}{2006}]{TsiatisAnastasiosA.AnastasiosAthanasios2006Stam}
Tsiatis, A. (2006).
\newblock {\em Semiparametric theory and missing data}.
\newblock Springer series in statistics. New York: Springer.

\bibitem[\protect\citeauthoryear{Uehara, Kanamori, Takenouchi, and
  Matsuda}{Uehara et~al.}{2020}]{UeharaMasatoshi2019Ueff}
Uehara, M., T.~Kanamori, T.~Takenouchi, and T.~Matsuda (2020).
\newblock Unified estimation framework for unnormalized models with statistical
  efficiency.
\newblock {\em In Proceedings of the 23nd International Workshop on Artificial
  Intelligence and Statistics (AISTATS 2020)\/}.

\bibitem[\protect\citeauthoryear{Uehara, Matsuda, and Komaki}{Uehara
  et~al.}{2018}]{Uehara}
Uehara, M., T.~Matsuda, and F.~Komaki (2018).
\newblock Analysis of noise contrastive estimation from the perspective of
  asymptotic variance.
\newblock {\em arXiv preprint arXiv:1808.07983\/}.

\bibitem[\protect\citeauthoryear{van~der Vaart}{van~der
  Vaart}{1998}]{VaartA.W.vander1998As}
van~der Vaart, A.~W. (1998).
\newblock {\em Asymptotic statistics}.
\newblock Cambridge, UK ; New York, NY, USA: Cambridge University Press.

\bibitem[\protect\citeauthoryear{Wang and Robins}{Wang and
  Robins}{1998}]{wang98}
Wang, N. and J.~M. Robins (1998).
\newblock Large-sample theory for parametric multiple imputation procedures.
\newblock {\em Biometrika\/}~{\em 85}, 935--948.

\bibitem[\protect\citeauthoryear{Wang, Shao, and Kim}{Wang
  et~al.}{2014}]{wang14}
Wang, S., J.~Shao, and J.~K. Kim (2014).
\newblock Identifiability and estimation in problems with nonignorable
  nonresponse.
\newblock {\em Statistical Science\/}~{\em 24}, 1097 -- 1116.

\bibitem[\protect\citeauthoryear{Wei and Tanner}{Wei and
  Tanner}{1990}]{WeiGregC.G.1990AMCI}
Wei, G. C.~G. and M.~A. Tanner (1990).
\newblock A monte carlo implementation of the em algorithm and the poor man's
  data augmentation algorithms.
\newblock {\em Journal of the American Statistical Association\/}~{\em 85},
  699--704.

\bibitem[\protect\citeauthoryear{Yang and Kim}{Yang and
  Kim}{2016}]{YangShu2016FIiS}
Yang, S. and J.~K. Kim (2016).
\newblock Fractional imputation in survey sampling: A comparative review.
\newblock {\em Statistical Science\/}~{\em 31}, 415--432.

\bibitem[\protect\citeauthoryear{Younes}{Younes}{1989}]{younes}
Younes, L. (1989).
\newblock {Maximum likelihood estimation for Gaussian fields}.
\newblock {\em Probability Theory and Related Fields\/}~{\em 82}, 625--645.

\bibitem[\protect\citeauthoryear{Yu, Kolar, and Gupta}{Yu
  et~al.}{2016}]{Yu2016}
Yu, M., M.~Kolar, and V.~Gupta (2016).
\newblock Statistical inference for pairwise graphical models using score
  matching.
\newblock In {\em Advances in Neural Information Processing Systems 29}, pp.\
  2829--2837.

\end{thebibliography}

\newpage 

\appendix

\onecolumn

\section{SUMMARY OF NOTATIONS}\label{sec:notation}

\begin{table}[!htbp]
    \centering
    \caption{Summary of notations}
    \begin{tabular}{l|l}
     $g(x)$    &  True density \\
     $a(x)$    & Auxiliary density \\
     $b(x)$ & Noise distribution \\
     $n$ & Sample size \\
     $\tilde{p}(x;\theta)$ & Unnormalized model \\
     $p(x;\theta)$ & Normalized model \\
     $q(x;\tau)$ & One-parameter extended model \\
     $x_{\mathrm{obs}}\,,x_{\mathrm{mis}}$ & Observed data and missing data \\
     $r(x)$ & $q(x;\tau)/a(x)$ or $\tilde{p}(x;\theta)/a(x)$ \\
     $\mathrm{Pr}(\delta|x,\phi)$ & Selection probability \\
     $\pi(\delta|x,\phi)$ & Propensity score model \\
     $\eta$ & $(\theta,\phi)$ \\
     $\zeta$ & $(\tau,\phi)$ \\
     $M_{sc}$ & Loss function of score matching \\
     $Z_{sc}$ & Estimating equation of score matching \\
     $M_{nc},M_{nc1},M_{nc2}$ & Loss function of NCE \\
     $Z_{nc},Z_{nc1},Z_{nc2}$ & Estimating equation of NCE \\
     $p(x;\theta)$ & Normalized model of $\tilde{p}(x;\theta)$ \\ 
     $t(x_{\mathrm{mis}};\eta)$ &  Posterior $p(x;\theta)\pi(\delta|x;\phi)$\\
    $\theta_{0}$ & True $\theta$ \\
    $x_{i}^{(*k)}$ & Imputed data \\
    $t(x)^{\otimes 2}$ & $t(x)t(x)^{\top}$ \\
    $\hat{\eta}_{p}$ & Initial estimator \\
     $\hat{\eta}_{sc}$ & Estimator by  FISCORE and MISCORE\\
     $\hat{\eta}_{nc,f}$ & Estimator by FINCE and MINCE \\
     $\mu$ & Baseline measure \\
     $c_{s}(x;\theta)$ & $\nabla_{x^s}\log \tilde{p}(x;\theta)$
    \end{tabular}
    \label{tab:my_label}
\end{table}

\section{PROOF}

To keep the clarity of the main points of this section, we will not specify regularity conditions. For details, see Chapter 5 in \cite{VaartA.W.vander1998As}. 

\begin{proof}[Proof of Theorem \ref{thm:easy} amd  \ref{thm:easy2}]
Direct calculation based on the original theory of M-estimator. 
\end{proof}

\begin{proof}[Proof of Theorem \ref{thm:main}]
First, we discuss the general derivation without using a specific form of $z_{sc}(\theta)$ so that it can be applied to NCE case. Then, we derived the specific formula for FISCORE. 

we have 
 \begin{align*}
 \bar{Z}_{sc}(\theta|\hat{\theta}_{p})=Z_{sc,\mathrm{obs}}(\theta)+\mathrm{E}[Z_{sc,\mathrm{mis}}|\bx_{\mathrm{obs}};\hat{\theta}_{p}],
 \end{align*}
 where $Z_{sc,\mathrm{mis}}=Z_{sc}(\theta)-Z_{sc,\mathrm{obs}}(\theta).$
By Taylor expansion, we have 
\begin{align*}
&\mathrm{E}[Z_{sc,\mathrm{mis}}|\bx_{\mathrm{obs}};\hat{\theta}_{p}] 
&= \mathrm{E}[Z_{sc,\mathrm{mis}}(\theta_0)|\bx_{\mathrm{obs}};\theta_{0}]+
\mathrm{E}[Z_{sc,\mathrm{mis}}(\theta)\nabla_{\theta^{\top}}\log p(\bx_{\mathrm{mis}}|\bx_{\mathrm{obs}};\theta)|\bx_{\mathrm{obs}};\theta_{0}]|_{\theta_0}(\hat{\theta}_{p}-\theta_{0})+\op.
\end{align*}

Therefore, 
 \begin{align}
 \label{eq:second}
 \bar{Z}_{sc}(\theta_{0}|\hat{\theta}_{p})   
 &=Z_{sc,\mathrm{obs}}(\theta_{0})-\mI_{2,sc}(\hat{\theta}_{p}-\theta_{0})+\op \nonumber \\
 &=-\mathcal{I}_{1,sc}(\hat{\theta}_{sc}-\theta_{0}) -\mathcal{I}_{2,sc}(\hat{\theta}_{p}-\theta_{0})+\op \\
 &=(-\mathcal{I}_{1,sc}-\mathcal{I}_{2,sc})(\hat{\theta}_{sc}-\theta_{0}) -\mathcal{I}_{2,sc}(\hat{\theta}_{p}-\hat{\theta}_{sc})+\op \nonumber,
 \end{align}
where 
\begin{align*}
    \mathcal{I}_{1,sc}&= \mathrm{E}[\nabla_{\theta^{\top}}Z_{sc,\mathrm{obs}}(\theta_{0})],\\
    \mathcal{I}_{2,sc}&=-\mathrm{E}[\mathrm{E}[Z_{sc,\mathrm{mis}}(\theta_{0})\nabla_{\theta^{\top}}\log p(\bx_{\mathrm{mis}}|\bx_{\mathrm{obs}};\theta_{0})]] \\
    &=-\mathrm{E}[\mathrm{E}[z_{sc,\mathrm{mis}}(\theta_{0})\nabla_{\theta^{\top}}\log p(x_{\mathrm{mis}}|x_{\mathrm{obs}};\theta_{0})]]  \\
    &=-\mathrm{E}[z_{sc,\mathrm{mis}}(\theta_{0})\nabla_{\theta^{\top}}\log p(x_{\mathrm{mis}}|x_{\mathrm{obs}})]  \\
    &=-\mathrm{E}[\mathrm{cov}[z_{sc,\mathrm{mis}}(\theta_{0}),\nabla_{\theta^{\top}}\log \tilde{p}(x_{\mathrm{mis}}|x_{\mathrm{obs}};\theta_{0})]|x_{\mathrm{obs}};\theta_{0}]. 
\end{align*}
From the first line to the second line \eqref{eq:second}, we used $\mathrm{E}[Z_{sc,\mathrm{mis}}(\theta_{0})|\bx_{\mathrm{obs}};\theta_{0}]=0$ and Theorem \ref{thm:easy}. 

In addition, since $\hat{\theta}_{sc,\infty}$ is the solution to $\bar{Z}_{sc}(\theta|\hat{\theta}_{p})$. Then,
\begin{align*}
0 &= \bar{Z}_{sc}(\hat{\theta}_{sc,\infty}|\hat{\theta}_{p})\\
&=\bar{Z}_{sc}(\theta_{0}|\hat{\theta}_{p})+\mathrm{E}[{\nabla_{\theta^{\top}}}Z_{sc}(\theta_0)](\hat{\theta}_{sc,\infty}-\theta_{0})+\op\\
&=\bar{Z}_{sc}(\theta_{0}|\hat{\theta}_{p})+\mathcal{I}_{3,sc}(\hat{\theta}_{sc,\infty}-\theta_{0})+\op,
\end{align*}
where 
\begin{align*}
    \mathcal{I}_{3,sc}= \mathrm{E}[\nabla_{\theta^{\top}}Z_{sc}(\theta_{0})].
\end{align*}
Therefore, we get
\begin{align*}
    (\hat{\theta}_{sc,\infty}-\theta_{0})
    &=-\mathcal{I}^{-1}_{3,sc}\{(-\mI_{1,sc}-\mI_{2,sc})(\hat{\theta}_{sc}-\theta_{0}) -\mathcal{I}_{2,sc}(\hat{\theta}_{p}-\hat{\theta}_{sc})\}+\op, \\
    &=(\hat{\theta}_{sc}-\theta_{0}) +\mathcal{I}^{-1}_{3,sc}\mI_{2,sc}(\hat{\theta}_{p}-\hat{\theta}_{sc})+\op.
\end{align*}
From the first line to the second line of the last equation, we used the relation $\mI_{3,sc}=\mI_{1,sc}+\mI_{2,sc}$. 
This is proved by 
\begin{align*}
\mI_{1,sc}+\mI_{2,sc}
&=\mathrm{E}[\nabla_{\theta^{\top}}(\mathrm{E}[Z_{sc}(\theta)|\bx_{\mathrm{obs}};\theta])]
-\mathrm{E}[\mathrm{E}[Z_{sc,\mathrm{mis}}(\theta_{0})\nabla_{\theta^{\top}}\log p(\bx_{\mathrm{mis}}|\bx_{\mathrm{obs}};\theta_{0})|\bx_{\mathrm{obs}};\theta_{0}]] \\
&=\mathrm{E}[\nabla_{\theta^{\top}}Z_{sc}(\theta_{0})]=\mI_{3,sc}.
\end{align*}

We go back to the special case of FISCORE. 

Noting $m_{sc}(\theta)=\sum_{s=1}^{d_x}0.5c_{s}^{2}(x)+\nabla_{x^{s}}(c_{s}(x))$,
the term $z_{sc}(\theta)$ is 
\begin{align*}
    z_{sc}(\theta)=\sum_{s=1}^{d}\left \{c_{s}(x)\nabla_{\theta}(c_{s}(x))+\nabla_{x^{s}}(\nabla_{\theta}c_{s}(x))\right \}.
\end{align*}
Then, we have 
\begin{align*}
&\mathrm{E}[\nabla_{\theta^{\top}}z_{sc,\mathrm{obs}}(\theta)]|_{\theta_0} =\mathrm{E}[\nabla_{\theta^{\top}}\{\mathrm{E}[z_{sc}(\theta)|x_{\mathrm{obs}};\theta]\}] |_{\theta_0}\\
=&\mathrm{E}[\nabla_{\theta^{\top}} z_{sc}(\theta)]|_{\theta_0}+\mathrm{E}[\mathrm{E}[z_{sc}(\theta)\{\nabla_{\theta^{\top}}\log p(x_{\mathrm{mis}}|x_{\mathrm{obs}};\theta)\}|x_{\mathrm{obs}};\theta_{0}]]|_{\theta_0} \\
=&\mathrm{E}[\nabla_{\theta^{\top}} z_{sc}(\theta)]|_{\theta_0}+\mathrm{E}[z_{sc}(\theta)\{\nabla_{\theta^{\top}}\log p(x_{\mathrm{mis}}|x_{\mathrm{obs}};\theta)\}]|_{\theta_0},
\end{align*}
where $\nabla_{\theta}\log p(x_{\mathrm{mis}}|x_{\mathrm{obs}};\theta)$ is 
\begin{align*}
\nabla_{\theta}\log \tilde{p}(x;\theta)- \mathrm{E}[\nabla_{\theta}\log  \tilde{p}(x;\theta)|x_{\mathrm{obs}};\theta]. 
\end{align*}
So, the above is equal to 
\begin{align*}
 \mathrm{E}[\nabla_{\theta^{\top}} z_{sc}(\theta)]|_{\theta_0}+\mathrm{E}[\mathrm{cov}[z_{sc}(\theta),\nabla_{\theta}\log \tilde{p}(x;\theta)|x_{\mathrm{obs}}]]|_{\theta_0}.
\end{align*}
In addition, 
\begin{align*}
\mathrm{E}[\nabla_{\theta^{\top}} z_{sc}(\theta)]|_{\theta_0}&= \mathrm{E}\left[\sum_{s=1}^{d}\left \{\nabla_{\theta}c_{s}(x)\nabla_{\theta^{\top}}c_{s}(x) +c_{s}(x)\nabla_{\theta\theta^{\top}}c_{s}(x)+\nabla_{x^{s}}(\nabla_{\theta\theta^{\top}}c_{s}(x))\right \}\right]|_{\theta_0}\\
&= \mathrm{E}\left[\sum_{s=1}^{d}\{\nabla_{\theta}c_{s}(x)\}^{\otimes 2}\right]|_{\theta_0}.
\end{align*}
From the second line to the third line, 
we used a partial integration trick, which is a core concept of score matching. 
\end{proof}

\begin{proof}[Proof of Corollary \ref{cor:clear}]

Clear from Theorem \ref{thm:finite}. 

\end{proof}

\begin{proof}[Proof of Corollary \ref{cor:nce}]

First, we calculate $\mJ_{1,nc}$. By noting the sampling mechanism of full data is a stratified sampling, this is calculated as follows: 
\begin{align*}
n^{-1}(\mathrm{var}_{q}[\mathrm{E}[z_{nc1}(x;\tau_0)|x_{\mathrm{obs}}]]+\mathrm{var}_{a}[z_{nc2}(y;\tau_0)]).
\end{align*}
Next, we calculate $\mI_{1,nc}$: 
\begin{align}
    \mI_{1,nc} &= \mathrm{E}[\nabla_{\tau^{\top}}Z_{nc,\mathrm{obs}}(\tau)]|_{\tau_{0}}=\mathrm{E}[\nabla_{\tau^{\top}}z_{nc,\mathrm{obs}}(\tau)]|_{\tau_{0}}
    =\mathrm{E}[\nabla_{\tau^{\top}}\{\mathrm{E}[z_{nc}(x,y;\tau)|x_{\mathrm{obs}};\tau]\}]|_{\tau_{0}} \nonumber \\
    &=\mathrm{E}[\nabla_{\tau^{\top}} z_{nc}(x,y;\tau)]|_{\tau_{0}}+\mathrm{E}[z_{nc}(x,y;\tau)\{\nabla_{\tau^{\top}}\log \bar{q}(x_{\mathrm{mis}}|x_{\mathrm{obs}};\tau) \}]|_{\tau_{0}} \\
    &= \mI_{3,nc}-\mI_{2,nc}
    \label{eq:exa}.
\end{align}
where
\begin{align*}
 \bar{q}(x_{\mathrm{mis}}|x_{\mathrm{obs}};\tau)=q(x_{\mathrm{mis}},x_{\mathrm{obs}};\tau)/\int q(x_{\mathrm{mis}},x_{\mathrm{obs}};\tau)\mu(\mathrm{d}x_{\mathrm{mis}}).
\end{align*}
By some algebra, the first term in \eqref{eq:exa} is 
\begin{align*}
    \mI_{3,nc}=\mathrm{E}[\nabla_{\tau^{\top}} z_{nc}(x,y;\tau)]|_{\tau_{0}}=\mathrm{E}\left[\frac{\nabla_{\tau}\log q(x;\tau_0)^{\otimes 2}}{1+r}\right]|_{\tau_{0}}.
\end{align*}
In addition, the second term in \eqref{eq:exa} is 
\begin{align*}
\mI_{2,nc}&=-\mathrm{E}[z_{nc}(x,y;\tau)\{\nabla_{\tau^{\top}}\log \bar{q}(x_{\mathrm{mis}}|x_{\mathrm{obs}};\tau) \}]|_{\tau_{0}}\\
&=-\mathrm{E}[\mathrm{E}[z_{nc1}(x;\tau)|x_{\mathrm{obs}}]\{\nabla_{\tau^{\top}}\log \bar{q}(x_{\mathrm{mis}}|x_{\mathrm{obs}};\tau) \}]|_{\tau_{0}} \\
&=-\mathrm{E}[\mathrm{cov}[z_{nc1}(x;\tau),\nabla_{\tau}\log q(x;\tau)|x_{\mathrm{obs}}]] \\
&=\mI_{3,nc}-\mathrm{E}\left[\mathrm{E}\left[\frac{\nabla_{\tau}\log q(x;\tau_0)}{1+r}|x_{\mathrm{obs}}\right]\mathrm{E}\left[\nabla_{\tau^{\top}}\log q(x;\tau_0)|x_{\mathrm{obs}}\right]\right]. 
\end{align*}
Therefore, adding the first and the second term in \eqref{eq:exa}, we get 
\begin{align*}
\mI_{1,nc}= \mathrm{E}\left[\mathrm{E}\left[\frac{\nabla_{\tau}\log q(x;\tau_0)}{1+r}|x_{\mathrm{obs}}\right]\mathrm{E}\left[\nabla_{\tau^{\top}}\log q(x;\tau_0)|x_{\mathrm{obs}}\right]\right].
\end{align*}
\end{proof}

\begin{proof}[Proof of Corollary \ref{cor:convergece}]
By some algebra, as in the proof of Corollary \ref{cor:nce}, we obtain
\begin{align*}
\mI_{1,nc} &=\mathrm{E}\left[\mathrm{E}\left[\nabla_{\tau}\log q(x;\tau_0)|x_{\mathrm{obs}}\right]^{\otimes 2}\right],\\
\mI_{3,nc} &=\mathrm{E}\left[\nabla_{\tau}\log q(x;\tau_0)^{\otimes 2}\right].
\end{align*}
So, noting that $\mI_{3,nc}$ is a positive definite matrix, and $\mI_{3,nc}$ and $\mI_{1,nc}$ are symmetric matrices, we can express $\mI_{3,nc}=RR^{\top}$ and $\mI_{1,nc}=R\Lambda R^{\top}$ using a nonsingular matrix $R$ \citep{RaoC.Radhakrishna2008LSIa}. Because $\mI_{3,nc}-\mI_{1,nc}$ is a positive matrix from Jensen's inequality, each element in $\Lambda$ is less than $1$. Then, we get
\begin{align*}
   \mI_{3,nc}^{-1}\mI_{2,nc}=\mI_{3,nc}^{-1}(\mI_{3,nc}-\mI_{1,nc})=R^{-1}(I-\Lambda)R.
\end{align*}
Finally, 
\begin{align*}
   (\mI_{3,nc}^{-1}\mI_{2,nc})^{j}=R^{-1}(I-\Lambda)^{j}R.
\end{align*}
Therefore, $\{\mI_{3,nc}^{-1}\mI_{2,nc}\}^{j}$ converges to zero as $j$ tends to infinity. 
\end{proof}

\section{Multiple missing patterns}
\label{sec:multiple}
We explain how to handle the case when the missing pattern is multiple. In the most general case, we have to introduce a missing pattern indicator $\delta$ that takes values in $0,1,…,2^K-1$, where $K$ is the dimension of $x$, for each sample. This is based on the fact that there are $2^{K}$ possible missing patterns for $x$. In the main manuscript, we have defined $\delta$ to take a value of $0$ or $1$ because there are only two missing patterns, i.e., one value is missing or not. Then, we have to introduce an importance distribution $b(x_{mis})$ separately for each missing pattern in the general case. In practice, we can simply select the importance distribution for each coordinate and take their products. Thus, the proposed methods can be applied to the general missing case. For example, in Algorithm 2, we impute missing values of each sample using the importance distribution of corresponding missing pattern. Then, W-step and M-step are essentially the same. Here is a concrete example. 

\begin{example}
Consider the case $x_{1,obs}= [0, \tNA, 1],\,x_{2,obs} = [\tNA, \tNA, 4],x_{3,obs} = [1,2,3]$. 

In this case, there are $3$ missing patterns: $(*,*,*), (*,\tNA,* ), (\tNA, \tNA, *)$, where * means an observed value and $\tNA$ means a missing value. Then, we introduce a missing indicator $\delta$ to take values in ${0,1,2}$. Namely, $\delta=0$ corresponds to $(*,*,*)$, $\delta=1$ corresponds to $(*,\tNA,* )$, and $\delta=2$ corresponds to $ (\tNA, \tNA, *)$. Then, for $x_{1,obs}=[0,\tNA,1]$, we have $\delta_1=1$. For $x_{2,\mathrm{obs}}= [\tNA, \tNA, 4]$, we have $\delta_2=2$. Since $x_3$ is observed without missing, then $\delta_3=0$. \eqref{eq:em2} is rewritten:
\begin{align*}
    \frac{1}{n}\sum_{i=1}^{n}\sum_{k=0}^{2}\mathrm{I(\delta_i=k)}\mathrm{E}[z_{nc}(x_i;\tau)\mid x_{i,\mathrm{obs}};\theta]. 
\end{align*}
\end{example}

\section{COMPARISON BETWEEN FINCE and VNCE}\label{sec:difference}

Here, we compare FINCE and variational NCE (VNCE) \citep{Rhodes}. From \eqref{eq:nce}, the difference between the estimator proposed in this paper and VNCE \citep{Rhodes} is clearly shown. Mainly, there are two differences: (1) VNCE attempts to maximize the observed likelihood directly, whereas FINCE attempts to solve the observed estimating equation, (2) VNCE assumes that the dimension of $a(x)$ is the same as the dimension of $x_{\mathrm{obs}}$, whereas FINCE assumes that the dimension of $a(x)$ is the same as the dimension of $x$. 

More specifically, an ideal loss function in VNCE is
\begin{align}
\label{eq:vnce}
    &\argmax_{s}J_{\text{VNCE}}(\tau,s(x_{\mathrm{mis}}))=J_{\text{VNCE}}(\tau,q(x_{\mathrm{mis}}|x_{\mathrm{obs}})) \nonumber \\
    =& \frac{1}{n}\sum_{i=1}^{n}\mathrm{E}\left[\log\left\{\frac{1}{1+\frac{q(x_{\mathrm{mis}}|x_{i,\mathrm{obs}})a(x_{i,\mathrm{obs}})}{q(x_{\mathrm{mis}},x_{i,\mathrm{obs}})}}\right \}|x_{i,\mathrm{obs}}\right]+
    \frac{1}{n}\sum_{j=1}^{n}\log\left \{\frac{a(y_j)}{a(y_j)+
    \mathrm{E}[q(y_{j},y_{j,mis})|y_{j}]}\right \}\nonumber \\
    =& \frac{1}{n}\sum_{i=1}^{n}\log\left\{\frac{q(x_{i,\mathrm{obs}})}{q(x_{i,\mathrm{obs}})+a(x_{i,\mathrm{obs}})}\right \}+\frac{1}{n}\sum_{j=1}^{n}\log\left \{\frac{a(y_j)}{a(y_j)+q(y_{j})}\right \}, 
\end{align}
where $q(x_{\mathrm{obs}})=\int q(x_{\mathrm{mis}},x_{\mathrm{obs}})\mu(\mathrm{d}x_{\mathrm{mis}})$. 
On the other hand, the loss function of our proposed estimator is \eqref{eq:nce}. In general, the efficiencies of the two loss function are not directly comparable. The following points highlight the comparison between two methods.
\begin{itemize}
    \item In terms of inferences, our proposed methods (FINCE,\,FISCORE) are superior to VNCE because it is difficult to achieve the upper bound in \eqref{eq:vnce} in VNCE. 
    \item Unless the family of variational distribution includes the true posterior, VNCE does not have consistency. On the other hand, FINCE has consistency and also asymptotic normality without requiring such conditions. 
    \item In terms of the scalability, VNCE is superior to the proposed methods because VNCE does not require any sampling methods. 
    \item FINCE can be applied even if the missing data mechanism is MAR or MNAR. However, VNCE cannot be directly applied when the missing mechanism is MAR or NMAR. 
\end{itemize}

\section{INFERENCE OF FISCORE WHEN $m$ IS FIXED}\label{sec:finite}

We consider an asymptotic result of FISCORE when $m$ is fixed. Actually, the estimating equation $Z_{sc,m}$ is not unbiased estimator for $\bar{Z}_{sc}$ because a self normalizing importance sampling is used rather than importance sampling \citep{mcbook}. 
This means that the derived estimator is theoretically not consistent; however, practically, a self normalized importance sampling is preferable to importance sampling because of its robustness. Here, we consider the case when the weight is defined as $w(x|x_{\mathrm{obs}})=p(x_{\mathrm{mis}}|x_{\mathrm{obs}};\theta_0)/b(x)$.  

As in the proof of Theorem \ref{thm:main}, we have 
\begin{align}
   &\hat{\theta}_{sc,m}-\theta_{0}=-\mI_{3,sc}^{-1}Z_{sc,m}(\theta_0|\hat{\theta}_{p})+\mathrm{o}_{p}(n^{-1/2})
 \label{eq:twoterms}
\end{align}
This term is decomposed into two terms: $-\mI_{3,sc}^{-1}\bar{Z}_{sc}(\theta_0|\hat{\theta}_{p})$ and $-\mI_{3,sc}^{-1}\{Z_{sc,m}(\theta_0|\hat{\theta}_{p})-\bar{Z}_{sc}(\theta_0|\hat{\theta}_{p})\}$. These two terms in \eqref{eq:twoterms} are independent. The first term is equal to $\hat{\theta}_{sc,\infty}-\theta_{0}$, of which the asymptotic property is shown in Theorem \ref{thm:main}. The second term converges to the normal distribution with mean $0$ and variance $\mI_{3,sc}^{-1}\mathrm{E}[\mathrm{var}_{b}\{Z_{sc,m}(\theta_0|\hat{\theta}_{p})\}] \mI_{3,sc}^{\top-1}$.

\begin{theorem}
\label{thm:finite}
When $\hat{\theta}_{p}=\hat{\theta}_{sc}$, the asymptotic variance of $\hat{\theta}_{sc,m}$ is equal to 
\begin{align*}
\mI_{1,sc}^{-1}\mJ_{1,sc}\mI_{1,sc}^{\top-1}+m^{-1}\mI_{3,sc}^{-1}\mJ_{2,sc}\mI_{3,sc}^{\top-1},
\end{align*}
where $w(x|x_{\mathrm{obs}})=p(x_{\mathrm{mis}}|x_{\mathrm{obs}};\theta_0)/b(x)$ and 
\begin{align*}
  \mathcal{J}_{2,sc}=n^{-1}\mathrm{E}[\mathrm{E}_{b(x_{\mathrm{mis}})}[ w^{2}(x)\{z_{sc}(\theta_0)^{\otimes 2}\}|x_{\mathrm{obs}}]]-n^{-1}\mathrm{E}[\mathrm{E}[z_{sc}(\theta_0) |x_{\mathrm{obs}}]^{\otimes 2}].
\end{align*}
\end{theorem}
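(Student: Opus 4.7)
The plan is to build on the two-term decomposition already given just before the theorem,
\begin{align*}
\hat{\theta}_{sc,m}-\theta_{0}=-\mI_{3,sc}^{-1}\bar{Z}_{sc}(\theta_{0}|\hat{\theta}_{p})-\mI_{3,sc}^{-1}\{Z_{sc,m}(\theta_{0}|\hat{\theta}_{p})-\bar{Z}_{sc}(\theta_{0}|\hat{\theta}_{p})\}+\op,
\end{align*}
together with the asymptotic independence of the two summands that is claimed there, and to evaluate their limiting variances separately so their sum yields the result. For the first summand, a one-step Taylor expansion of $\bar{Z}_{sc}(\hat{\theta}_{sc,\infty}|\hat{\theta}_{p})=0$ around $\theta_{0}$ gives $-\mI_{3,sc}^{-1}\bar{Z}_{sc}(\theta_{0}|\hat{\theta}_{p})=\hat{\theta}_{sc,\infty}-\theta_{0}+\op$. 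Plugging the hypothesis $\hat{\theta}_{p}=\hat{\theta}_{sc,f}$ into Theorem~\ref{thm:main} makes the correction $\mI_{3,sc}^{-1}\mI_{2,sc}(\hat{\theta}_{p}-\hat{\theta}_{sc,f})$ vanish identically, so this summand is asymptotically equivalent to $\hat{\theta}_{sc,f}-\theta_{0}$, whose asymptotic variance is $\mI_{1,sc}^{-1}\mJ_{1,sc}\mI_{1,sc}^{\top-1}$ by Theorem~\ref{thm:easy}.

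For the Monte Carlo error summand I would compute its variance directly, exploiting that the oracle weight $w(x)=p(x_{\mathrm{mis}}|x_{\mathrm{obs}};\theta_{0})/b(x_{\mathrm{mis}})$ yields exactly unbiased importance sampling: $\mathrm{E}_{b}[Z_{sc,m}|\mathbf{x}_{\mathrm{obs}}]=\bar{Z}_{sc}$, so the conditional mean of the error is zero. Since the $x_{i,\mathrm{mis}}^{*k}$ are independent across both $i$ and $k$ given $\mathbf{x}_{\mathrm{obs}}$,
\begin{align*}
\mathrm{Var}_{b}\!\left[Z_{sc,m}-\bar{Z}_{sc}\mid\mathbf{x}_{\mathrm{obs}}\right]=\frac{1}{n^{2}m}\sum_{i=1}^{n}\mathrm{Var}_{b}\!\left[w(x)z_{sc}(\theta_{0})\mid x_{i,\mathrm{obs}}\right];
\end{align*}
expanding the inner conditional variance as $\mathrm{E}_{b}[w^{2}z_{sc}^{\otimes 2}|x_{\mathrm{obs}}]-\mathrm{E}[z_{sc}|x_{\mathrm{obs}}]^{\otimes 2}$ (the second term arising because $\mathrm{E}_{b}[wz_{sc}|x_{\mathrm{obs}}]=\mathrm{E}[z_{sc}|x_{\mathrm{obs}}]$) and taking unconditional expectation matches $m^{-1}\mJ_{2,sc}$ exactly by the definition given in the theorem. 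Sandwiching by $\mI_{3,sc}^{-1}$ produces the contribution $m^{-1}\mI_{3,sc}^{-1}\mJ_{2,sc}\mI_{3,sc}^{\top-1}$, and additivity across the two summands follows from the independence already claimed.

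I expect the main obstacle to be making the conditional CLT rigorous when $m$ is held fixed while $n\to\infty$: one needs a Lindeberg-type condition for the double array $\{w(x_{i,\mathrm{mis}}^{*k})z_{sc}(\theta_{0};x_{i}^{*k})\}$ and finiteness of $\mathrm{E}_{b}[w^{2}z_{sc}^{\otimes 2}|x_{\mathrm{obs}}]$, along with uniform-in-$m$ control of the Taylor remainders in both the $\hat{\theta}_{sc,m}$ and $\hat{\theta}_{sc,\infty}$ expansions so that a joint Slutsky-type argument yields the desired Gaussian limit with the independent structure required for variances to add. The algebraic identification of the Monte Carlo variance with $\mJ_{2,sc}$ is, by contrast, a routine consequence of the unbiasedness of the oracle weight.
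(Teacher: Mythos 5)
Your proposal follows essentially the same route as the paper, which proves Theorem~\ref{thm:finite} only via the sketch preceding it: the decomposition of $\hat{\theta}_{sc,m}-\theta_{0}$ into $-\mI_{3,sc}^{-1}\bar{Z}_{sc}(\theta_0|\hat{\theta}_{p})$ plus the independent Monte Carlo error term, with the first handled by Theorems~\ref{thm:easy} and~\ref{thm:main} (the correction vanishing since $\hat{\theta}_{p}=\hat{\theta}_{sc,f}$) and the second by the conditional variance of the unbiased oracle-weight importance sampler. Your elaboration of the variance algebra identifying $m^{-1}\mJ_{2,sc}$ is correct and consistent with the paper's stated result.
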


\section{EXTENSION TO MULTIPLE IMPUTATION: MISCORE AND MINCE}\label{sec:MI}

MI was originally developed with Bayesian flavor \citep{rubin87,meng94}. In this paper, we consider frequentist MI rather than Bayesian MI \citep{TsiatisAnastasiosA.AnastasiosAthanasios2006Stam} to avoid the additional computation. In addition, it is shown that frequentist MI is asymptotically more efficient than Bayesian MI \citep{wang98,robins00}.

In MI, the crucial assumption is that the sample can be obtained from $p(x_{\mathrm{mis}}|x_{\mathrm{obs}};\theta)$. When the missing data mechanism is MAR, it is easy to sample from  $p(x_{\mathrm{mis}}|x_{\mathrm{obs}};\theta)$ using the MCMC based on \eqref{eq:mcmc}. The algorithm is described as in Algorithm \ref{al:em5}.
In this paper, this approach is referred to as MISCORE. MINCE is also defined similarly. Nevertheless, we do not recommend Algorithm \ref{al:em5} for the practical reason of its instability and computational burden. 

\begin{algorithm}
   \Repeat{$\hat{\tau}_{t}$ converges}
   {
   		W-step: Take a set of $m$ samples from $x_{\mathrm{mis}}^{*k}\sim p(x_{\mathrm{mis}}|x_{\mathrm{obs}};\hat{\theta}_{t})$ using MCMC for each $i$ \\
       M-step:  Update the solution to the following function with respect to $\theta$ as $\hat{\theta}_{t+1}$: 
        \begin{align*}
           \frac{1}{nm}\sum_{i=1}^{n}\sum_{k=1}^{m} m_{sc}(x_{i}^{*k};\theta). 
        \end{align*}
   }
\caption{MISCORE}
\label{al:em5}
\end{algorithm}

Dues to the challenges associated with Algorithm \ref{al:em5}, we recommend the following algorithm. This algorithm is similar to the one in \cite{LevineRichardA2001IotM}. In the original MISCORE, a set of samples is generated at every step. This requires tremendous computational cost and causes instability. In Algorithm \ref{al:em6}, by constructing a $\sqrt{n}$--consistent estimator based on FISCORE at each step and updating by MISCORE one time, this limitation is overcome. 

\begin{algorithm}[H]
\label{al:em6}
   \Repeat{$\hat{\tau}_{t}$ converges}{
   	Do W-step and M-step in Algorithm \ref{al:em3} (FISCORE) 
  }
 Do W-step and M-step in Algorithm \ref{al:em5} (MISCORE)
\caption{One step MISCORE}
\end{algorithm}

Table \ref{tab:exp5} illustrates the experimental result. We generated a set of $50$ samples for each $i$ using MCMC in the last step. Compared with FINCE and FISCORE, the performance of one step MISCORE is worse. Perhaps, more step is needed. 

\begin{table}[!]
    \centering
    \caption{Monte Carlo median square error and bias}
        \label{tab:exp5}
    Setting 1  \\
    \begin{tabular}{llcc} \toprule
    n     &  & MINCE & MISCORE \\ \midrule
    500 & (bias) &  0.15  &   0.10    \\
       & (mse) &    0.037  &  0.024 \\
    1000 & (bias) & 0.13   & 0.12   \\
       & (mse)  &  0.030 &  0.020   \\ \bottomrule  
    \end{tabular}
 
\end{table}

The asymptotic property is obtained as follows. 
\begin{corollary}
\label{cor:miscore}
When $\hat{\theta}_{p}=\hat{\theta}_{sc}$ and $m$ is fixed, the asymptotic variance of $\hat{\theta}_{sc,\infty}$ is equal to 
\begin{align*}
\mI_{1,sc}^{-1}\mJ_{1,sc}\mI_{1,sc}^{\top-1}+m^{-1}\mI_{3,sc}^{-1}\mJ_{2,sc}\mI_{3,sc}^{\top -1},
\end{align*}
where 
\begin{align*}
  \mathcal{J}_{2,sc}=n^{-1}\{\mathrm{E}[z_{sc}(\theta_0)^{\otimes 2}]-\mathrm{E}[\mathrm{E}[z_{sc}(\theta_0) |x_{\mathrm{obs}}]^{\otimes 2}]\},
\end{align*}
and other terms are the same as in Theorem \ref{thm:main}.
\end{corollary}

\begin{proof}[Proof of Corollary \ref{cor:miscore}]
We just replace $b(x_{\mathrm{mis}})$ with $p(x_{\mathrm{mis}}|x_{\mathrm{obs}};\theta_0)$ in Theorem \ref{thm:finite}. 
\end{proof}

Finally, there are two things to note about MISCORE and MINCE. When the missing data mechanism is MNAR, we have to sample from $\tilde{p}(x_{\mathrm{mis}}|x_{\mathrm{obs}},\delta;\eta)\propto \tilde{p}(x_{\mathrm{mis}}|x_{\mathrm{obs}};\theta)\pi(\delta|x_{\mathrm{mis}},x_{\mathrm{obs}};\phi)$. In this case, the distribution becomes a doubly-intractable distribution \citep{MllerJ.2006AeMc, MurrayIain2012Mfdd}, and it is generally difficult to sample. Secondly, when we use a Bayesian multiple imputation assuming the prior distribution $\rho(\theta)$, even if the missing mechanism is MAR, we have to sample from $\tilde{p}(x_{\mathrm{mis}},\theta|x_{\mathrm{obs}}) \propto \tilde{p}(x_{\mathrm{mis}},x_{\mathrm{obs}};\theta)\rho(\theta)$. Often, data augmentation is utilized for this purpose \citep{tanner87}. However, even if the data augmentation is applied, we still have to deal with doubly--intractable distributions to calculate $\mathrm{Pr}(\theta|x)\propto \rho(\theta)p(x;\theta)$.

\section{EXTENSION TO CONTRASTIVE DIVERGENCE METHODS}\label{sec:CD}

Although there are several variations of contrastive divergence methods \citep{younes,TielemanTijmen2008TrBm}, the basic idea is that $\theta$ is updated by adding the gradient of log-likelihood $\log p(\bx;\theta)$ with respect to $\theta$:
\begin{align*}
    \frac{1}{n}\sum_{i=1}^{n}\nabla_{\theta} \log \tilde{p}(x_{i};\theta)-\mathrm{E}_{ p(x;\theta)}[\nabla_{\theta}\log \tilde{p}(x;\theta)], 
\end{align*}
multiplying some learning rate. When some data is not observed, the expected gradient becomes 
\begin{align*}
    \frac{1}{n}\sum_{i=1}^{n}\mathrm{E}[\nabla_{\theta} \log \tilde{p}(x_{i};\theta)|x_{i,\mathrm{obs}};\theta]-\mathrm{E}[\nabla_{\theta}\log \tilde{p}(x;\theta)]. 
\end{align*}
The expectation of the first term is taken under $p(x_{\mathrm{mis}}|x_{\mathrm{obs}};\theta)$. It is possible to sample from MCMC like \eqref{eq:mcmc} without involving  doubly-intractable distributions \citep{MllerJ.2006AeMc}. 
Therefore, the gradient is approximated as 
\begin{align*}
    \frac{1}{nm}\sum_{i=1}^{n}\sum_{k=1}^{m}\nabla_{\theta} \log \tilde{p}(x_{i}^{*k};\theta)-\frac{1}{n} \sum_{j=1}^{n}\nabla_{\theta}\log \tilde{p}(y_j;\theta),
\end{align*}
where $x_{i}^{*k}\sim p(x_{\mathrm{mis}}|x_{i,\mathrm{obs}};\theta) $ and $y_j\sim p(y;\theta)$. 
We refer the updating method using the above gradient as MICD. 

We can still use a FI approach for the approximation. By introducing an auxiliary distribution with a density $b(x)$, the gradient is approximated as  
\begin{align*}
    \frac{1}{n}\sum_{i=1}^{n}\sum_{k=1}^{m}w_{ik}\nabla_{\theta}\log \tilde{p}(x_{i}^{*k};\theta)-\frac{1}{n} \sum_{j=1}^{n}\nabla_{\theta}\log \tilde{p}(y_j;\theta). 
\end{align*}
where $x_{i}^{*k}\sim b(x),w_{ik}\propto \tilde{p}(x_{i}^{*k};\theta)/b(x_{i}^{*k}),y_j\sim p(y;\theta)$. We refer this approach to FICD. 

Furthermore, by introducing a noise distribution with a density $a(y)$ to prevent using MCMC totally, the gradient is approximated as  
\begin{align*}
    \frac{1}{n}\sum_{i=1}^{n}\sum_{k=1}^{m}w_{ik}\nabla_{\theta}\log \tilde{p}(x_{i}^{*k};\theta)-\frac{1}{n} \sum_{j=1}^{n}r_{j}\nabla_{\theta}\log \tilde{p}(y_j;\theta),
\end{align*}
where $x_{i}^{*k}\sim b(x),\,w_{ik}\propto \tilde{p}(x_{i}^{*k};\theta)/b(x_{i}^{*k}),\,y_j\sim a(y)$, and  $r_j\propto \tilde{p}(y_j;\theta)/a(y_j)$. 
In this case, the gradient is essentially equivalent to the loss function of FINCE when $f(x)=x\log x$ by profiling-out $c$.

\section{DETAILED ALGORITHM OF FINCE WITH MNAR DATA}\label{sec:mnar}

The algorithm is described as in Algorithm \ref{al:em_mnar}. 

\begin{algorithm}[!h]%[H]
Initialize $t=0$, $\hat{\zeta}_{0}=(\hat{c}_0,\hat{\theta}_0,\hat{\phi}_0)$ \\
Take $n$ samples $\{y_{j}\}_{j=1}^{n}$ from $a(y)$.\\
For $i$ with $\delta_i=0$, take $m$ samples $\{x_{i,\mathrm{mis}}^{*k}\}_{k=1}^{m}$ from $b(x)$. \\
For $i$ with $\delta_i=1$, set $m$ samples $\{x_{i}^{*k}\}_{k=1}^{m}$ to $x_{i}^{*k}=x_{i}$
\\
   \Repeat{$\hat{\tau}_{t}$ converges}
   {
   		W-Step:    \\ 
   		For $i$ with $\delta_i=0$;
   		$w_{ik} \propto q(x_{i}^{*k};\hat{\tau}_{t})\pi(\delta_i|x_{i}^{*k};\hat{\phi}_{t})/b(x^{*k}_{i,\mathrm{mis}})$, \\
          For $i$ with $\delta_i=1$; $w_{ik}=1/m$. 
        \\
        M-step: Solve the following equation for $\hat{\zeta}_{t+1}$ w.r.t $\zeta$:
        \begin{align*}
        &\frac{1}{n}\sum_{i=1}^{n}\sum_{k=1}^{m}w_{ik}z_{nc1}(x_{i}^{*k};\tau)+Z_{nc2}(\by;\zeta)=0, \\
        & \frac{1}{n}\sum_{i=1}^{n}\sum_{k=1}^{m} \nabla_{\phi}w_{ik} \log \pi(\delta_i|x_{i}^{*k};\phi)=0.
         \end{align*}
         $t = t+1$
   }
\caption{FINCE witn MNAR data}
\label{al:em_mnar}
\end{algorithm}

\section{VARIANCE ESTIMATORS OF FISCORE AND FINCE}\label{sec:variance}

\subsection{FISCORE}
The variance estimator of FISCORE in the case of Corollary is defined as follows:
$\hat{\mI}_{1,sc}^{-1}\hat{\mJ}_{1,sc}\hat{\mI}_{1,sc}^{\top -1}|_{\hat{\theta}}$, where
\begin{align*}
\hat{\mI}_{1,sc} &=\frac{1}{n}\sum_{i=1}^{n}\{\hat{\mI}_{1,sc1}(x_{i,\mathrm{obs}})+\hat{\mI}_{1,sc2}(x_{i,\mathrm{obs}})-\hat{\mI}_{1,sc3}(x_{i,\mathrm{obs}}) \},\\
\hat{\mI}_{1,sc1}(x_{i,\mathrm{obs}}) &= \sum_{k=1}^{m}w(x_{i}^{*k};\theta)\left(\sum_{s=1}^{d}\nabla_{\theta}c_{s}(x_{i}^{*k})\right)^{\otimes 2},\\
\hat{\mI}_{1,sc2}(x_{i,\mathrm{obs}})&=\sum_{k=1}^{m}w(x_{i}^{*k};\theta)z_{sc}(x_{i}^{*k})\nabla_{\theta^{\top}}\log \tilde{p}(x_{i}^{*k};\theta), \\
\hat{\mI}_{1,sc3}(x_{i,\mathrm{obs}}) &=\left(\sum_{k=1}^{m}w(x_{i}^{*k};\theta)z_{sc}(x_{i}^{*k})\right)\left(\sum_{k=1}^{m}w(x_{i}^{*k};\theta)\nabla_{\theta^{\top}}\log \tilde{p}(x_{i}^{*k};\theta)\right), \\
\hat{\mJ}_{1,sc} &=  \frac{1}{n(n-1)}\sum_{i=1}^{n}\left(\sum_{k=1}^{m}w(x_{i}^{*k};\theta)z_{sc}(x_{i}^{*k};\theta)-\bar{z} \right)^{\otimes 2},\\
\bar{z}&=\frac{1}{n}\sum_{i=1}^{n}\sum_{k=1}^{m}w(x_{i}^{*k};\theta)z_{sc}(x_{i}^{*k};\theta),\\
z_{sc}(\theta)&=\sum_{s=1}^{d}\left\{c_{s}(x)\nabla_{\theta}(c_{s}(x))+\nabla_{x^{s}}(\nabla_{\theta}c_{s}(x))\right\},\,c_{s}(x;\theta)=\nabla_{x^s}\log \tilde{p}(x;\theta).
\end{align*}

Next, consider an loss function and a variance estimator in truncated exponential family cases \citep{HyvarinenAapo2007Seos}. 
Assume that $\tilde{p}(x;\theta)$ is given by 
\begin{align*}
    \log \tilde{p}(x;\theta)=\sum_{k=1}^{d}\theta_k F_k(x).
\end{align*}
Let us denote two matrices: $d_{\theta}\times d$ matrix $K_1(x)$ with elements $\nabla_{x^b}F_{a}\,(1\leq a \leq d_{\theta},1\leq b\leq d)$ and $d_{\theta}\times 1$ matrix, $K_{i,2}(x)$ with elements $\nabla\nabla_{x^i}F_{a}\,(1\leq a \leq d_x)$. 
The loss function is written as $n^{-1}\sum_{i=1}^{n}z_{sc,t}(x_i;\theta)$. 
\begin{align*}
    z_{sc,t}(x;\theta)=0.5\theta^{\top}K_1(x)K_1(x)^{\top} \theta+\theta^{\top}\sum_{i=1}^{d_x}K_{i,2}(x). 
\end{align*}
The variance estimator is obtained almost in the same by replacing $z_{sc}(x)$ with $z_{sc,t}(x)$. The only modification is 
\begin{align*}
\hat{\mI}_{1,sc1}(x_{i,\mathrm{obs}}) = \sum_{k=1}^{m}w(x_{i}^{*k};\theta)K_1(x_{i}^{*k})K_1(x_{i}^{*k})^{\top}.
\end{align*}

\subsection{FINCE}
The variance estimator of FINCE in the case of Corollary \ref{cor:nce} is defined as follows: $\hat{\mI}_{1,nc}^{-1}\hat{\mJ}_{1,nc}\hat{\mI}_{1,nc}^{\top{-1}}|_{\hat{\tau}}$, where 
\begin{align*}
\hat{\mI}_{1,nc}&=\frac{1}{n}\sum_{i=1}^{n}\left(\sum_{k=1}^{m}w(x_{i}^{*k};\tau) \frac{\nabla_{\tau}\log q(x_{i}^{*k};\tau)}{1+q(x_{i}^{*k})/a(x_{i}^{*k})}\right)\left(\sum_{k=1}^{m} w(x_{i}^{*k};\tau) \nabla_{\tau^{\top}}\log q(x_{i}^{*k};\tau)\right),\\
\hat{\mJ}_{1,nc}&=\hat{\mJ}_{1,nc1}+\hat{\mJ}_{1,nc2}, \\
\hat{\mJ}_{1,nc1} &=  \frac{1}{n(n-1)}\sum_{i=1}^{n}\left(\sum_{k=1}^{m}w(x_{i}^{*k};\tau)z_{nc1}(x_{i}^{*k};\tau)-\bar{z}_{nc1} \right)^{\otimes 2},\\
\hat{\mJ}_{1,nc2} &=  \frac{1}{n(n-1)}\sum_{i=1}^{n}\left(z_{nc2}(y_{i};\tau)-\bar{z}_{nc2}\right)^{\otimes 2},\\
\bar{z}_{nc1}&=\frac{1}{n}\sum_{i=1}^{n}\sum_{k=1}^{m}w(x_{i}^{*k};\tau)z_{nc1}(x_{i}^{*k};\tau),\,\bar{z}_{nc2}=\frac{1}{n}\sum_{j=1}^{n}z_{nc2}(y_{j};\tau),\\
z_{nc1}(\tau)&=-\frac{\nabla_{\tau}\log q(x;\tau)}{1+r},\,z_{nc2}(\tau)=\frac{r\nabla_{\tau}\log q(x;\tau)}{1+r}.
\end{align*}
\end{document}